\documentclass[10pt]{article} 
\usepackage[preprint]{tmlr}

\usepackage{hyperref}
\usepackage{url}

\title{PAC-Bayesian Reconstruction Guarantees for \\Time Series Variational Autoencoders}

\author{\name Chlo\'e Hashimoto-Cullen \email hashimoto@lpsm.paris \\
      \addr LPSM,
      Sorbonne Universit\'e
      \AND
      \name Ghislain Agoua\footnote{Work done whilst at EDF R\&D/} \email ghislain.agoua@gmail.com
      \AND
      \name Benjamin Guedj \email b.guedj@ucl.ac.uk\\
      \addr University College London and Inria\\
      \AND
      \name Sylvain Le Corff \email sylvain.lecorff@sorbonne-universite.fr\\
      \addr LPSM,
      Sorbonne Universit\'e}

\def\month{MM}  
\def\year{YYYY} 
\def\openreview{\url{https://openreview.net/forum?id=XXXX}} 

\usepackage{graphicx}
\usepackage[utf8]{inputenc} 
\usepackage[T1]{fontenc}    
\usepackage{url}            
\usepackage{booktabs}       
\usepackage{multirow}       
\usepackage{amsfonts}       
\usepackage{nicefrac}       
\usepackage{microtype}      
\usepackage{xcolor}         
\usepackage{wrapfig}        
\usepackage{array}          
\usepackage{caption}        
\usepackage{enumitem}       
\usepackage{changepage}     
\usepackage{xspace}
\usepackage{datatool}

\usepackage{algorithm}      
\usepackage{algpseudocode}  

\usepackage{amsthm}         
\usepackage{dsfont}         
\usepackage{amsmath}        
\usepackage{amssymb}        %
\usepackage{mathtools}      %

\usepackage{url}
\usepackage{hyperref}       
\usepackage{cleveref}       

\newtheorem{theorem}{Theorem}
\newtheorem{lemma}[theorem]{Lemma}
\newtheorem{proposition}[theorem]{Proposition}
\newtheorem{corollary}[theorem]{Corollary}

\usepackage{natbib}         

\newcommand{\eqsp}{\,}
\newcommand{\rmd}{\mathrm{d}}

\newcommand{\E}{\mathbb{E}}

\newcommand{\R}{\mathbb{R}}
\newcommand{\N}{\mathbb{N}}
\newcommand{\mcn}{\mathcal{N}}

\newcommand{\step}{t}
\newcommand{\datadim}{T}

\newcommand{\timerange}{1:T}
\newcommand{\inputindex}{t}
\newcommand{\datasamples}{n}
\newcommand{\samplesindex}{i}

\newcommand{\dataset}{\mathsf{S}}
\newcommand{\covardim}{d}
\newcommand{\datagendis}{\mathcal{D}}
\newcommand{\data}{\mathbf{x}}
\newcommand{\latent}{\mathbf{z}}
\newcommand{\datapt}{x_\inputindex}
\newcommand{\latentpt}{z_\inputindex}
\newcommand{\latentspace}{\mathsf{Z}^\datadim}

\newcommand{\latentspacept}{\mathsf{Z}}

\newcommand{\risk}{\mathcal{R}}
\newcommand{\kl}{\operatorname{KL}}

\newcommand{\paramenc}{\phi}
\newcommand{\paramdec}{\theta}
\newcommand{\encdistrib}{q}
\newcommand{\decdistrib}{p}
\newcommand{\vae}{\mathrm{VAE}}
\newcommand{\decfunc}{g_{\paramdec}}

\newcommand{\seriestep}{x}
\newcommand{\series}{\mathbf{\seriestep}}
\newcommand{\serie}{\series^\samplesindex}

\newcommand{\lookback}{\mathsf{L}}
\newcommand{\horizon}{\mathsf{H}}
\newcommand{\dist}{\mathrm{d}}

\algnewcommand\algorithmicinput{\textbf{Input:}}
\algnewcommand\Input{\item[\algorithmicinput]}
\algnewcommand\algorithmicoutput{\textbf{Output:}}
\algnewcommand\Output{\item[\algorithmicoutput]}

\newcommand{\vd}[1]{q_{#1}^{\paramenc}} 
\newcommand{\parvar}{\paramenc} 
\newcounter{hypH}
\newenvironment{hypH}{\refstepcounter{hypH}\begin{itemize}
\item[{\bf H\arabic{hypH}}]}{\end{itemize}}

\usepackage{csvsimple}

\begin{document}

\maketitle

\begin{abstract}
Forecasting time series accurately is critical for applications with complex data ranging from energy systems to healthcare and finance. Among current state of the art models, generative latent variable models are increasingly implemented; yet principled generalisation guarantees for modern latent variable models remain limited. In particular, while Variational AutoEncoders are widely used for sequential data, their theoretical analysis is largely restricted to i.i.d. settings. In this work, we develop a PAC-Bayesian framework for latent variables models applied to time series. Building on reconstruction-based bounds, we extend PAC-Bayesian guarantees to Markovian latent structures, capturing temporal dependencies through a sequential generative process. These guarantees do not grow with the length of the trajectory. Our bounds depend on assumptions which are common in the literature; we provide an example framework where they would be verified to show that they are not as restrictive as they may seem.
\end{abstract}

\section{Introduction}
\label{sec:intro}
Many sectors rely on accurate time series forecasts: examples of these include the financial world, the medical domain and the energy sector. Each application has its own constraints, which has given rise to the development of multiple algorithms. Approaches range from those taken in traditional statistics, such as the autoregressive family of algorithms \citep{mulla_times_2024}, to deep learning such as recurrent neural networks and variants thereof \citep{rumelhart_learning_1986, hochreiter1997long, cho_properties_2014}, and, more recently, to tabular and time series foundation models such as those of \citet{Yuqietal_2023_PatchTST, ansari2024chronos, qu2025tabicl}. Since time series data can be very long and have many covariates, it is sometimes necessary to rely on feature engineering \citep{cerqueira_vest_2024}, or project them into a latent space \citep{wang_learning_2022, weisser2026cross}, for a model to output more precise predictions. Many approaches use an encoder-decoder architecture: for problems such as anomaly detection \citep{yokkampon_robust_2022} and data generation \citep{li_causal_2023}, in domains ranging from medical time series  to analysis of internet traffic \citep{li_anomaly_2021}. Variational Autoencoders \citep[VAEs,][]{kingma_vaes_2014, DBLP:journals/corr/Doersch16, MAL-056} provide many approaches for time series forecasting \citep{DBLP:journals/corr/abs-2111-08095, CAI2023111079}.

VAEs are widely used in many time series applications; however, few theoretical guarantees exist for this family of models. The theoretical works that exist include guarantees on the convergence of VAEs during training \citep{surendran_theoretical_2025}, and for robustly trained  VAEs \citep{camuto2021towards,barrett_certifiably_2022}. As for reconstruction guarantees, existing contributions are mainly PAC-Bayesian, in the works of \citet{cherief-abdellatif_pac-bayesian_2022,mbacke_statistical_2023}. However, neither of these papers covers time series, which is one of the contributions of this work.

Most existing works on VAEs, as well as on PAC-Bayesian bounds, focus on settings with independent and identically distributed data, often restricted to static data such as images. This work addresses time series, specifically exploring scenarios where the chosen variational distribution introduces dependencies. This calls for additional mathematical decisions: among these, practitioners must choose appropriate ways to factorise the time series distribution. Classical approaches include mean-field approximation \citep{Blei2017variational} and Markov chain factorisation \citep[Part II]{douc2014nonlinear}. This raises specific questions concerning the factorisation and stability of the chosen variational approximation. In particular, Markovian and backward factorisations provide a natural way to represent such dependencies while retaining a tractable recursive structure. The main contribution of this paper is to develop PAC-Bayesian guarantees for sequential VAEs, with an explicit dependence on the sample size and on the stability properties of the structured variational distribution. Under suitable mixing and regularity assumptions, the resulting bounds remain controlled as the trajectory length increases.

To complement the theoretical analysis, we discuss a lightweight discrete-latent instantiation of the considered framework. The purpose of this construction is not to propose a new forecasting architecture, but to exhibit a simple setting in which the assumptions underlying the theoretical results can be verified and the different terms of the bounds can be interpreted explicitly. In the context of time series forecasting, where datasets can admit a latent discrete representation \citep{fortuin2018deep, cohen_variational_2023}, Vector-Quantised VAEs \citep[VQ-VAEs,][]{van2017neural} provide a relevant modelling approach. On a finite latent state space, uniformly positive transition probabilities ensure that the strong-mixing assumptions hold, while simple distance-based parametrisations of the variational distributions makes the sensitivity of the variational kernels controllable. We introduce a structured discrete variational distribution built from the predictions of a pre-trained forecasting model and a finite reference set of trajectories. It is intended as a concrete illustration of  the PAC-Bayesian guarantees to a tractable sequential latent representation, not as an additional algorithmic contribution.

\textbf{This paper provides the following contributions.} 
\begin{itemize}
    \item We establish PAC-Bayesian reconstruction guarantees for sequential VAEs with structured variational distributions. Building on the conditional PAC-Bayesian approach of \citet{mbacke_statistical_2023}, we obtain a sharp concentration bound and extend it to Markovian variational distributions through an explicit stability analysis. Under suitable assumptions, the additional structural contribution remains uniformly controlled with respect to the trajectory length. The proposed analysis goes beyond uniformly bounded losses through a sub-gamma formulation, allowing the framework to cover a broader class of losses relevant to VAEs.
    \item We discuss a lightweight discrete-latent instantiation of the theoretical framework for which the Markovian structure and mixing assumptions can be made explicit. This construction is used as a proof of concept to illustrate how the theoretical conditions can guide the design of structured variational representations for time series.
\end{itemize}

\textbf{Outline.} The rest of this paper is organised as follows. \Cref{sec:background} briefly introduces existing work on VAEs, PAC-Bayesian generalisation bounds and excess risk bounds for state space models. \Cref{sec:theoretical} introduces the assumptions under which we establish PAC-Bayesian reconstruction guarantees for VAEs, and then discusses a discrete-latent instantiation illustrating the resulting theoretical framework. 
We draw conclusions and discuss avenues of future work in \Cref{sec:conclusion}.

\section{Background and Related Works}
\label{sec:background}
In the following, for all distributions (resp. probability density) $q$, we write $\E_q$ the expectation under the distribution (resp. probability density) $q$. For all sequences $(\mathbf{a}_u)_{u\in\mathbb{Z}}$ and all $s\leq t$, we write $\mathbf{a}_{s:t} = (a_s,\ldots, a_t)$. For all distributions $\decdistrib$ and $\encdistrib$ absolutely continuous with respect to a measure $\mu$ on a measurable space $(\mathsf{E},\mathcal{E})$, the \emph{Kullback-Leibler} (KL) divergence between $\decdistrib$ and $\encdistrib$ is defined as:
\begin{equation*}
    \kl(\decdistrib\|  \encdistrib) = \int_{} \decdistrib(u) \log \frac{\decdistrib(u)}{\encdistrib(u)} \mu(\rmd u) \eqsp.
\end{equation*}
\paragraph{Variational Autoencoders.} Consider a dataset $\{ \data^1_{\timerange}, \ldots, \data^\datasamples_{\timerange}\}$ with $\datasamples$ independent samples of time series of length $T\in\mathbb{N}$, with each time step $x_\inputindex^\samplesindex \in \mathbb{R}^\covardim$ containing $\covardim$ covariates.
The time steps are indexed with $\inputindex$ and the time series are indexed with $\samplesindex$. For all $1\leq i \leq \datasamples$,  $\data^i_{\timerange}$ has unknown distribution $\datagendis$ and depends on a latent variable $\mathbf{z}^i_{1:T}\in\mathbb{R}^{T\times d_\ell}$.

Consider a family of joint probability distributions $\{(\mathbf{z}_{\timerange},\mathbf{x}_{\timerange}) \mapsto  p_{\theta}(\mathbf{z}_{\timerange},\mathbf{x}_{\timerange})\}_{\theta\in\Theta}$ where $\Theta$ is a parameter space.
In this setting, we write, for all $\theta\in\Theta$, $\data_{\timerange}\in\mathbb{R}^{\datadim \times \covardim}$, $\latent_{\timerange}\in\mathbb{R}^{\datadim \times d_\ell}$,
\begin{equation*}
    p_{\theta}(\latent_{\timerange},\data_{\timerange}) = p_{\theta}(\latent_{\timerange})p_{\theta}(\data_{\timerange}|\latent_{\timerange})\eqsp.
\end{equation*}
The latent variable generative model defines a \emph{prior} $\latent_{1:T}\mapsto p_{\theta}(\latent_{1:T})$ over the latent variable $\latent_{\timerange}$ and a conditional density $\data_{1:T}\mapsto p_\theta(\data_{1:T}|\latent_{1:T})$, also known as the \emph{decoder}. The normalised loglikelihood is therefore given by
\begin{equation*}
    \ell_{\datasamples}(\theta) = \frac{1}{\datasamples}\sum_{\samplesindex=1}^\datasamples \log p_{\theta}(\data^{\samplesindex}_{\timerange}) 
    = \frac{1}{\datasamples}\sum_{\samplesindex=1}^\datasamples \log \int \decdistrib_{\paramdec}(\latent_{\timerange})\decdistrib_{\paramdec}(\data^i_{\timerange}|\latent_{\timerange})\rmd \latent_{\timerange}\eqsp,
\end{equation*}
and the conditional distribution is $\decdistrib_{\paramdec}(\latent_{\timerange}|\data_{\timerange})\propto \decdistrib_{\paramdec}(\latent_{\timerange})\decdistrib_{\paramdec}(\data_{\timerange}|\latent_{\timerange})$. In most cases, maximising the average marginal log-likelihood of the data is not possible.
As the conditional density $\decdistrib_{\paramdec}(\latent_{\timerange}|\data_{\timerange})$ is not available, consider a family of probability density functions $\{(\latent_{\timerange},\data_{\timerange}) \mapsto \encdistrib_{\varphi}(\latent_{\timerange}|\data_{\timerange})\}_{\varphi\in\Phi}$ that aim to approximate $\decdistrib_{\paramdec}(\latent_{\timerange}|\data_{\timerange})$. For all $\paramenc\in\Phi$, $q_{\paramenc}$ is referred to as an \emph{encoder}. The respective goals of the decoder and encoder are to learn a conditional likelihood $\decdistrib_{\paramdec}(\data_{\timerange} \mid \latent_{\timerange})$ which approximates the data-generating distribution $\datagendis$ on $\mathbb{R}^{\datadim \times \covardim}$ and to learn $\encdistrib_{\paramenc}(\latent_{\timerange} \mid \data_{\timerange})$ which approximates the intractable posterior $\decdistrib_{\paramdec}(\latent_{\timerange} \mid \data_{\timerange})$ on $\R^{\datadim \times d_\ell}$.

The encoder and decoder are trained jointly to maximise the Evidence Lower Bound (ELBO), a function of the data $\data_{\timerange}$ which depends on the number of time steps and a regularisation hyperparameter $\beta > 0$. The standard loss function \citep[negative ELBO,][]{higgins_-vae_2017} is given by  
\begin{equation}
\label{eq_elbo}
    \mathcal{L}_{\vae}(\paramenc, \paramdec) = \frac{1}{\datasamples} \sum_{\samplesindex = 1}^{\datasamples} \biggl[ \E_{\encdistrib_{\paramenc}(\cdot\mid\mathbf{x}^i_{1:T})} [-\log \decdistrib_{\paramdec}(\data_{\timerange}^\samplesindex \mid \cdot)] 
    + \beta \kl(\encdistrib_{\paramenc}(\cdot \vert \data_{\timerange}^\samplesindex)\| \decdistrib_{\paramdec} )\biggr] \eqsp.
\end{equation}
In general, the encoder and decoder are trained jointly to minimise an  empirical version of this loss:
\begin{equation}
\label{eq_empirical_elbo}
    \hat{\mathcal{L}}_{\vae}(\paramenc, \paramdec) = \frac{1}{\datasamples M} \sum_{\samplesindex = 1}^{\datasamples}\sum_{j=1}^M \biggl[ \ell_{\paramdec}(\mathbf{z}_{1:T}^{i,j}, \data_{\timerange}^{\samplesindex}) 
    + \beta \log \frac{\encdistrib_{\paramenc}(\mathbf{z}_{1:T}^{i,j} \vert \data_{\timerange}^\samplesindex)}{p_\theta(\mathbf{z}_{1:T}^{i,j})}\biggr] \eqsp,
\end{equation}
where for all $1\leq i \leq n$, $(\mathbf{z}_{1:T}^{i,j})_{1\leq j \leq M}$ are i.i.d. with distribution $\encdistrib_{\paramenc}(\cdot \vert \data_{\timerange}^\samplesindex)$ and  $\ell_{\paramdec} : \R^{\datadim \times d_\ell} \times \R^{\datadim \times \covardim} \rightarrow \R$ is a reconstruction loss which depends on the choice of model. 
Note the dimension of the latent variable is generally assumed to be much smaller than that of the data.

The loss can be chosen according to the model and assumptions on the data. In the case where under $\decdistrib_\paramdec\left(\data_{\timerange} \mid \latent_{\timerange}\right)$, the random variables $\data_{\timerange}$ are independent and Gaussian with mean function given for every $1\leq t \leq \datadim$ by $\decfunc(\latentpt)$ (a trainable decoding function which maps latent states to observations), we consider the classic squared loss $\ell_\paramdec(\latent_{\timerange}, \data_{\timerange}) = 
    \sum_{\inputindex = 1}^{\datadim}\ell_{\inputindex}(\datapt,\decfunc(\latentpt))/\datadim = \frac{1}{T}\sum_{t=1}^T(x_t - g_\theta(z_t))^2$.

\paragraph{PAC-Bayes Bounds.}
\label{subsec_pacbayes}
In machine learning, generalisation is a model's capacity to perform well on unseen data. Various statistical methods can be used to provide  generalisation guarantees, which are presented as a bound on a loss function. This guarantees that the performance will be suitable on a new dataset. There are multiple ways to derive such bounds, which exist, amongst others, in information theory \citep{hellstrom_generalization_2025}. The Probably Approximately Correct (PAC)-Bayesian framework provides many such bounds, see \citet{alquier2024user} and \citet{guedj_primer_2019} for an introduction.

PAC-Bayesian theory focuses on the generalisation capacity of a randomly selected posterior  from a set of possible posteriors by bounding the gap between the theoretical and empirical risks.
The theoretical risk of a VAE, for a given loss function $\ell$, of the posterior distribution $\encdistrib_\paramenc(\cdot\vert \data_{\timerange})$, with $\paramenc \in \Phi$, is 
\begin{equation*}
    \risk(\paramenc) = \E_{\mathcal{D}} \biggl[ \E_{\encdistrib_{\paramenc}(\cdot \vert \data_{\timerange})} \left[ \ell(\cdot, \data_{\timerange}) \right] \biggr]\eqsp,
\end{equation*}
and the empirical risk is
\begin{equation*}
    \widehat\risk_n(\paramenc) = \frac{1}{\datasamples} \sum_{\samplesindex=1}^{\datasamples}\E_{\encdistrib_\paramenc(\cdot \vert \data_{\timerange}^\samplesindex)} \biggl[  \ell(\cdot, \data_{\timerange}^\samplesindex) \biggr]\eqsp.
\end{equation*}
Given the empirical risk on an observed sample, PAC-Bayesian theory provides probabilistic bounds on risk.
Non-vacuous PAC-Bayes generalisation bounds exist for a variety of deep learning architectures. Initially introduced by \citet{dziugaite_computing_2017}, a wealth of non-vacuous PAC-Bayesian deep learning bounds now exist. This includes bounds that are minimised when training neural networks \citep{letarte_dichotomize_2019,biggs2020differentiable,perez-ortiz2021tighter,biggs2021margins,biggs2022shallow}. Bounds exist for multiple different generative architectures, such as diffusion models \citep{mbacke_note_2024} and adversarial generative models \citep{mbacke_pac-bayesian_2023}.

PAC-Bayesian guarantees exist for VAE reconstruction loss \citep{cherief-abdellatif_pac-bayesian_2022} as well as for generation and regeneration \citep{mbacke_statistical_2023}. 
\citet{cherief-abdellatif_pac-bayesian_2022} provides a learning objective on the reconstruction gap and 
\citet{mbacke_statistical_2023} introduce a PAC-Bayesian bound for a conditional posterior distribution, which is then used alongside hypotheses on the data space to bound reconstruction loss and provide generation and regeneration guarantees for a VAE. 
The bound laid out within \Cref{lemma_conditional_hoeffding} reinterprets the objective from \citet{mbacke_statistical_2023} within the framework of the present paper on latent variable models that encode the dependencies and structure of an observed time series.

\paragraph{Excess-risk bounds for dependent data.} Variational excess-risk bounds have also been proposed recently for general state space models. Following \cite{campbell2021online}, \cite{JMLR:v25:22-1392}
provided upper bounds on the latent state estimation error using a backward factorization of the variational distribution. Using the same approach as in \cite{tang2021empirical},  \cite{gassiat2024variational} established an oracle inequality for the risk, explicit in particular in the number of samples and
in the length of the observation sequences, and \cite{fassina2026generalizing} provided a theoretically grounded KL-driven initialization of score-based diffusion models. Within the PAC-Bayesian framework, \citet{karagulyan2026empirical} provide bounds for Markov Chains that depend on a pseudo-spectral gap, and its estimator for an empirical version of the bound. However, obtaining statistical guarantees for VAEs in state space models using PAC-Bayesian theory remains largely an open problem. In this paper, we establish generalisation bounds to obtain theoretically grounded reconstruction and sampling guarantees in this setting. This framework with dependent data is also particularly relevant, not only for practical applications, but also because it ensures identifiability under suitable conditions \citep{khemakhem2020variational}, and for sequential data \citep{gassiat2020identifiability}.

\section{PAC-Bayesian Bounds on Reconstructing Time Series}
\label{sec:theoretical}
In order to derive generalisation bounds for state space models, the expected risk is initially bounded in \Cref{prop:conditional_hoeffding}. This key result only requires the loss function to be bounded, a classic assumption in PAC-Bayesian theory. 
The resulting bound provides a tighter guarantee building on the work of \citet{mbacke_statistical_2023}. It is then combined with a smoothness assumption similar to that of \citet{mbacke_statistical_2023}, which controls the diameter of the observation space: this assumption is then used to establish the main result, \Cref{prop:main:markov}, under additional structural assumptions on the distribution $\encdistrib_\paramenc$. These assumptions yield a richer bound that includes the mean-field case. Under a sub-gamma assumption for the reconstruction losses, \Cref{prop:conditional} avoids bounded losses and provides a bound which goes beyond the boundedness assumption or the control on the observation space.

\begin{proposition}
\label{prop:conditional_hoeffding}
Assume that the loss function $\ell_\theta:\R^{T\times d_\ell}\times\R^{T\times\covardim}\rightarrow\R^+$ is such that there exists $b>0$ satisfying, for all $\theta\in\Theta$, $\latent_{\timerange}\in\R^{T\times d_\ell}$ and $\data_{\timerange}\in\R^{T\times\covardim}$, $\ell_\theta(\latent_{\timerange},\data_{\timerange})\leq b$. 
Let $\theta\in\Theta$ be fixed independently of the observations and let $\decdistrib$ be a prior distribution on $\R^{T\times d_\ell}$ which does not depend on the observed dataset. Then, for all $\lambda_\datasamples>0$ and $\eta_\datasamples\in(0,1)$, with probability at least $1-\eta_\datasamples$ with respect to $\dataset=\left\{\data_{\timerange}^\samplesindex\right\}_{\samplesindex=1}^{\datasamples}\sim\datagendis^{\otimes\datasamples}$, the following inequality holds simultaneously for all conditional variational distributions $\encdistrib(\cdot\mid\data_{\timerange}^\samplesindex)$ satisfying $\encdistrib(\cdot\mid\data_{\timerange}^\samplesindex)\ll\decdistrib$:
\begin{multline}
\label{inequ_conditional_bound}
\frac{1}{\datasamples}\sum_{\samplesindex=1}^{\datasamples}\E_{\encdistrib(\cdot\mid\data_{\timerange}^\samplesindex)}\left[\E_{\data_{\timerange}\sim\datagendis}\left[\ell_\theta(\latent_{\timerange},\data_{\timerange})\right]\right]-\frac{1}{\datasamples}\sum_{\samplesindex=1}^{\datasamples}\E_{\encdistrib(\cdot\mid\data_{\timerange}^\samplesindex)}\left[\ell_\theta(\latent_{\timerange},\data_{\timerange}^\samplesindex)\right]\\
\leq\frac{1}{\lambda_\datasamples}\sum_{\samplesindex=1}^{\datasamples}\kl\left(\encdistrib(\cdot\mid\data_{\timerange}^\samplesindex)||\decdistrib\right)+\frac{\lambda_\datasamples b^2}{8\datasamples}+\frac{\log(1/\eta_\datasamples)}{\lambda_\datasamples}\eqsp.
\end{multline}
\end{proposition}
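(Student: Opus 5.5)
The plan is to use the standard change-of-measure argument, applied on the product space $(\R^{T\times d_\ell})^{\datasamples}$ rather than on a single copy of the latent space. Fix $\theta$ and write $\bar\ell(\latent_{\timerange}) = \E_{\data_{\timerange}\sim\datagendis}[\ell_\theta(\latent_{\timerange},\data_{\timerange})]$.

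\textbf{Product posterior and prior.} Let $P=\decdistrib^{\otimes\datasamples}$ be the product prior on $\latent^{1:\datasamples}=(\latent^1_{\timerange},\ldots,\latent^\datasamples_{\timerange})$. Given the sample, let $Q=\bigotimes_{\samplesindex}\encdistrib(\cdot\mid\data^\samplesindex_{\timerange})$, which satisfies $Q\ll P$. Define
\[
f(\latent^{1:\datasamples},\dataset)=\frac{\lambda_\datasamples}{\datasamples}\sum_{\samplesindex=1}^{\datasamples}\left(\bar\ell(\latent^\samplesindex_{\timerange})-\ell_\theta(\latent^\samplesindex_{\timerange},\data^\samplesindex_{\timerange})\right).
\]
Because $Q$ factorises, $\E_Q[f]$ equals $\lambda_\datasamples$ times the left-hand side of \eqref{inequ_conditional_bound}. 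The additivity of KL over product measures gives $\kl(Q\|P)=\sum_\samplesindex \kl(\encdistrib(\cdot\mid\data^\samplesindex_{\timerange})\|\decdistrib)$. This product structure is exactly why the bound carries the sum of the KL terms rather than an average.

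\textbf{Change of measure.} The Donsker--Varadhan variational formula gives, simultaneously for every such $Q$,
\[
\E_Q[f]\le \kl(Q\|P)+\log\E_P\left[e^{f}\right].
\]

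\textbf{Controlling the exponential moment.} The random variable $\E_P[e^f]$ depends on $\dataset$ but not on $Q$. Markov's inequality therefore gives, with probability at least $1-\eta_\datasamples$,
\[
\E_P[e^f]\le \eta_\datasamples^{-1}\,\E_{\dataset}\E_P[e^f].
\]
Since $P$ does not depend on the data, Fubini lets us swap the two expectations. For fixed $\latent^{1:\datasamples}$ drawn from $P$, the summands $\bar\ell(\latent^\samplesindex_{\timerange})-\ell_\theta(\latent^\samplesindex_{\timerange},\data^\samplesindex_{\timerange})$ are independent over $\samplesindex$, centred (their means are exactly $\bar\ell(\latent^\samplesindex_{\timerange})$), and take values in an interval of length $b$ because $0\le\ell_\theta\le b$. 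Hoeffding's lemma then bounds each factor by $\exp(\lambda_\datasamples^2b^2/(8\datasamples^2))$, so
\[
\E_{\dataset}[e^f]\le\exp\left(\frac{\lambda_\datasamples^2b^2}{8\datasamples}\right)
\]
uniformly in $\latent^{1:\datasamples}$.

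\textbf{Conclusion.} Combining these bounds and dividing by $\lambda_\datasamples$ yields
\[
\frac{1}{\lambda_\datasamples}\sum_\samplesindex\kl(\encdistrib(\cdot\mid\data^\samplesindex_{\timerange})\|\decdistrib)+\frac{\lambda_\datasamples b^2}{8\datasamples}+\frac{\log(1/\eta_\datasamples)}{\lambda_\datasamples},
\]
which is the stated inequality.

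\textbf{Main obstacle.} The delicate point is the measurability and uniformity setup. The family of conditional posteriors is indexed by the data, so the change of measure must be carried out on the product latent space, with a prior that is genuinely data-independent. One must also check that the Markov step is applied to a quantity that does not depend on $Q$, so that the bound holds simultaneously for all admissible conditional distributions. Once $f$ is defined with the per-sample coupling $\latent^\samplesindex\leftrightarrow\data^\samplesindex$, everything else is routine; $\theta$ must be fixed independently of $\dataset$ so that Hoeffding's lemma applies.
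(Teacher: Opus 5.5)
Your proposal is correct and follows the paper's proof. The ingredients are the same: Hoeffding's lemma for each sample with the latent held fixed, then independence and Fubini to bound $\E_{\dataset}[\Gamma_\datasamples(\dataset)]$, then Markov's inequality on the $Q$-independent quantity $\E_{\decdistrib^{\otimes\datasamples}}[e^{f}]$, and finally the Donsker--Varadhan change of measure. The only cosmetic difference is that you apply Donsker--Varadhan once on the product space and use additivity of the KL divergence, whereas the paper (Lemma~\ref{lemma_conditional_hoeffding}) factorises the exponential moment under $\decdistrib^{\otimes\datasamples}$ and applies the change of measure coordinatewise; the two are equivalent.
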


\begin{proof}
    The proof is postponed to \Cref{subsec:proof:main}.
\end{proof}

In most settings, $\ell_\paramdec(\latent_{\timerange}, \data_{\timerange}) = \sum_{t=1}^{\datadim}\ell_{\theta,t}(\latentpt,\datapt)/\datadim$, so that bounding the loss amounts to considering bounded functions $\ell_{\theta,t}$, for any $1\leq t\leq \datadim$. The boundedness assumption is classical in PAC-Bayes theory \citep[][among others]{dziugaite_data-dependent_2018, rivasplata_pac-bayes_2020} and allows here for a more concise treatment of our theoretical results. Note however that a growing body of work, such as \citet{alquier2018simpler,haddouche2020pacbayes,haddouche2022supermartingales}, aims at relaxing the boundedness assumption, with techniques relying on assumptions such as sub-Gaussian or sub-gamma losses. The latter assumption is applied to the bound in \Cref{prop:conditional}.

\begin{corollary}
\label{cor:conditional:hoeffding:optimized}
Assume that the assumptions of Proposition~\ref{prop:conditional_hoeffding} hold. Assume also that there exists $K_\datasamples>0$ such that $\sum_{\samplesindex=1}^{\datasamples}\kl\left(\encdistrib(\cdot\mid\data_{\timerange}^\samplesindex)||\decdistrib\right)\leq K_\datasamples$. 
Let $\eta_\datasamples\in(0,1)$ and choose
$$
\lambda_\datasamples=\sqrt{\frac{8\datasamples\{ K_\datasamples+\log(1/\eta_\datasamples)\}}{b^2}}\eqsp.
$$
Then, with probability at least $1-\eta_\datasamples$ with respect to $\dataset\sim\datagendis^{\otimes\datasamples}$,
\begin{equation}
\label{eq:conditional:hoeffding:optimized}
\frac{1}{\datasamples}\sum_{\samplesindex=1}^{\datasamples}\E_{\encdistrib(\cdot\mid\data_{\timerange}^\samplesindex)}\left[\E_{\data_{\timerange}\sim\datagendis}\left[\ell_\theta(\latent_{\timerange},\data_{\timerange})\right]\right]-\frac{1}{\datasamples}\sum_{\samplesindex=1}^{\datasamples}\E_{\encdistrib(\cdot\mid\data_{\timerange}^\samplesindex)}\left[\ell_\theta(\latent_{\timerange},\data_{\timerange}^\samplesindex)\right]\
\leq b\sqrt{\frac{ K_\datasamples+\log(1/\eta_\datasamples)}{2\datasamples}}\eqsp.
\end{equation}
\end{corollary}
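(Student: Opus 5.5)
The plan is to apply Proposition~\ref{prop:conditional_hoeffding} with the stated choice of $\lambda_\datasamples$, bound the KL sum by $K_\datasamples$, and simplify. Since $\lambda_\datasamples$ depends only on $\datasamples$, $b$, $\eta_\datasamples$ and the deterministic constant $K_\datasamples$, it is fixed before the data are observed. Proposition~\ref{prop:conditional_hoeffding} holds for every $\lambda_\datasamples>0$, so it holds for this one. With probability at least $1-\eta_\datasamples$, simultaneously for all admissible conditional distributions $\encdistrib(\cdot\mid\data^\samplesindex_{\timerange})$, the left-hand side of \eqref{eq:conditional:hoeffding:optimized} is therefore bounded by the right-hand side of \eqref{inequ_conditional_bound}.

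On that event, the assumption $\sum_{\samplesindex}\kl(\encdistrib(\cdot\mid\data^\samplesindex_{\timerange})\|\decdistrib)\le K_\datasamples$ gives the upper bound
\begin{equation*}
\frac{K_\datasamples+\log(1/\eta_\datasamples)}{\lambda_\datasamples}+\frac{\lambda_\datasamples b^2}{8\datasamples}\eqsp.
\end{equation*}
Write $A=K_\datasamples+\log(1/\eta_\datasamples)$. The function $\lambda\mapsto A/\lambda+\lambda b^2/(8\datasamples)$ is minimised at $\lambda=\sqrt{8\datasamples A/b^2}$, which is exactly the prescribed $\lambda_\datasamples$. At this value the two terms are equal, each being $\sqrt{A b^2/(8\datasamples)}$. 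Their sum is $2b\sqrt{A/(8\datasamples)}=b\sqrt{A/(2\datasamples)}$, which is the claimed bound.

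The only point requiring care is the legitimacy of the choice of $\lambda_\datasamples$. It must not depend on the sample $\dataset$ or on the posterior $\encdistrib$; otherwise a union bound or grid argument over $\lambda$ would be needed. This holds here because $K_\datasamples$ is a deterministic bound assumed in advance. The KL bound is then used only pointwise on the high-probability event, so the rest is a routine AM--GM computation.
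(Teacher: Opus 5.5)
Your proof is correct and follows the paper's argument: you apply Proposition~\ref{prop:conditional_hoeffding} at the prescribed $\lambda_\datasamples$, replace the KL sum by $K_\datasamples$, and note that the two remaining terms balance at $b\sqrt{(K_\datasamples+\log(1/\eta_\datasamples))/(8\datasamples)}$ each, summing to the claimed bound. You also remark that this choice of $\lambda_\datasamples$ is legitimate only because $K_\datasamples$ is a deterministic, data-independent bound; the paper leaves this point implicit, and it is the right one to flag.
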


\begin{proof}
By Proposition~\ref{prop:conditional_hoeffding}, with probability at least $1-\eta_\datasamples$,
$$
\frac{1}{\datasamples}\sum_{\samplesindex=1}^{\datasamples}\E_{\encdistrib(\cdot\mid\data_{\timerange}^\samplesindex)}\left[\E_{\data_{\timerange}\sim\datagendis}\left[\ell_\theta(\latent_{\timerange},\data_{\timerange})\right]\right]-\frac{1}{\datasamples}\sum_{\samplesindex=1}^{\datasamples}\E_{\encdistrib(\cdot\mid\data_{\timerange}^\samplesindex)}\left[\ell_\theta(\latent_{\timerange},\data_{\timerange}^\samplesindex)\right]\
\leq\frac{ K_\datasamples+\log(1/\eta_\datasamples)}{\lambda_\datasamples}+\frac{\lambda_\datasamples b^2}{8\datasamples}\eqsp.
$$
The right-hand side is minimised over $\lambda_\datasamples>0$ at
$$
\lambda_\datasamples=\sqrt{\frac{8\datasamples\{ K_\datasamples+\log(1/\eta_\datasamples)\}}{b^2}}\eqsp,
$$
which yields
$$
\frac{ K_\datasamples+\log(1/\eta_\datasamples)}{\lambda_\datasamples}=\frac{\lambda_\datasamples b^2}{8\datasamples}=b\sqrt{\frac{ K_\datasamples+\log(1/\eta_\datasamples)}{8\datasamples}}\eqsp,
$$
and concludes the proof.
\end{proof}

\emph{Choice of the confidence level and convergence rates.}

Assume that $ K_\datasamples=O(\datasamples^\kappa)$ for some $\kappa\in[0,1)$. If the confidence level is fixed, \emph{i.e.}, $\eta_\datasamples=\eta\in(0,1)$, then
$\lambda_\datasamples=O(\datasamples^{(1+\kappa)/2})$
and the upper bound in \eqref{eq:conditional:hoeffding:optimized} is of order
$O(\datasamples^{(\kappa-1)/2})$. 
In particular, if $ K_\datasamples=O(1)$, then
$\lambda_\datasamples=O(\sqrt{\datasamples})$ and the upper bound in 
\eqref{eq:conditional:hoeffding:optimized} is $O(\datasamples^{-1/2})$. More generally, if $\eta_\datasamples=\datasamples^{-\alpha}$ for some $\alpha>0$, then
$$
\lambda_\datasamples=\sqrt{\frac{8\datasamples\{ K_\datasamples+\alpha\log\datasamples\}}{b^2}}
$$
and the upper bound is  $O(((K_\datasamples+\log\datasamples)/\datasamples)^{1/2})$.
Consequently, if $ K_\datasamples=O(1)$, the resulting rate is
$O((\log\datasamples /\datasamples)^{1/2})$. 
If $ K_\datasamples=O(\datasamples^\kappa)$ for some $\kappa\in(0,1)$, the polynomial contribution dominates the logarithmic confidence term and the rate remains $O(\datasamples^{(\kappa-1)/2})$.

\paragraph{Upper bound for structured variational distributions. }
In this context where the observations are $\mathbf{x}_{1:T}$, consider a parametric family of variational distributions denoted as $\vd{1:T,x}$, \emph{i.e.}, the unknown variational parameter is $\parvar \in\Phi$ and the input data is written $x$ for better readability, with observations at time steps $1:T$.  Recently, \cite{campbell2021online,JMLR:v25:22-1392} proposed a backward factorisation of the variational family by introducing, for all $\mathbf{x}_{1:T}\in\mathbb{R}^{\datadim \times \covardim}$ with latent representation $\mathbf{z}_{1:T}\in\mathbb{R}^{\datadim \times \covardim_\ell}$,
\begin{equation}
\label{eq:backward:factorization}
 \vd{1:T,x}(\mathbf{z}_{1:T})=  \vd{T,x}(z_T)\prod_{k=2}^{T}\vd{k-1\vert k,x}(z_{k},z_{k - 1})\eqsp,
\end{equation}
where $\vd{T,x}$ (resp. $\vd{k - 1\vert k,x}(z_{k},\cdot)$) are user-chosen p.d.f. which may depend on the observations. Such a variational family allows the introduction of time-dependent latent variables (such as Markov chains) while defining the variational distribution recursively, with shared parameters for all kernels. 
This new design based on backward factorization is crucial for online parameter learning, see for instance \cite{chagneux2026efficient}.  

The theoretical analysis of the proposed PAC-Bayes generalisation bounds is performed under the strong-mixing assumption H\ref{hyp:strongmixing}. It is a classical assumption in the state-space models literature, in particular to establish long-term stochastic stability of smoothing algorithms, see \emph{e.g.} \citet[Chapter 4]{moral2004feynman}, \citet[Section 4.3]{cappe2005inference}, \citet{gloaguen2022pseudo} and references therein, and is satisfied when the latent state space is compact under suitable positivity/regularity assumptions on the transition densities. Such assumptions are also common in discrete state space models, see for instance \cite{de2017consistent} for the control of filtering and smoothing distributions in discrete hidden Markov models.

\begin{hypH} 
    \label{hyp:strongmixing}
    For all $\mathbf{x}_{1:T}\in\mathbb{R}^{\datadim \times \covardim}$, for $2\leq k \leq T$, there exist $0<\underline{\sigma}^\parvar_{k}(\mathbf{x}_{1:T})<\overline{\sigma}^\parvar_{k}(\mathbf{x}_{1:T})<\infty$ such that for all $\parvar \in\Phi$, $z_{k-1}$, $z_k$,
    $$
    \underline{\sigma}^\parvar_{k}(\mathbf{x}_{1:T})\leq \vd{k-1\vert k,x}(z_{k},z_{k - 1}) \leq \overline{\sigma}^\parvar_{k}(\mathbf{x}_{1:T})\eqsp.
    $$
    Let $\rho^\parvar_{k}(\mathbf{x}_{1:T}) = 1 - \underline{\sigma}^\parvar_{k}(\mathbf{x}_{1:T})/\overline{\sigma}^\parvar_{k}(\mathbf{x}_{1:T})$.
\end{hypH}
The main result provided in Proposition~\ref{prop:main:markov} establishes a generalisation bound in a case where the variational distribution has a Markovian structure as in \eqref{eq:backward:factorization}. This result requires to control the bias of the variational approximation of expectations of additive state functionals between different sequences of observations. This control is provided in Proposition~\ref{prop:MC:df}, which establishes that this error grows at most linearly in the number of observations under H\ref{hyp:strongmixing}.  

\begin{hypH} 
There exist $C^\parvar_{T}>0$, $C^\parvar_{k}$, $1\leq k \leq T-1$, and positive functions $d_k$, $1\leq k \leq T$, such that for all $\mathbf{x}_{1:T},\mathbf{x'}_{1:T}\in\mathbb{R}^{\datadim \times \covardim}$, all bounded measurable functions $\psi$, and all $\parvar \in\Phi$,
    \label{hyp:lip}
\begin{equation*}
         \left|\vd{T,x}[\psi]-\vd{T,x'}[\psi]\right|\\
         \leq C^\parvar_{T}\mathrm{d}_T(\mathbf{x}_{1:T},\mathbf{x}_{1:T}')\|\psi\|_\infty\eqsp.
    \end{equation*}
    and for all $2\leq k \leq T$,
    \begin{equation*}
         \left|\vd{k-1\vert k,x}(z_{k},\psi)-\vd{k-1\vert k,x'}(z_{k},\psi)\right| 
         \leq C^\parvar_{k-1}(z_{k})\mathrm{d}_{k-1}(\mathbf{x}_{1:T},\mathbf{x}_{1:T}')\|\psi\|_\infty\eqsp.
    \end{equation*}
 \end{hypH}   
In the independent setting, Assumption 2 of \cite{mbacke_statistical_2023} states that the encoder and decoder networks have finite Lipschitz norms. This is a common assumption for standard neural network-based VAEs, see for instance \cite{surendran_theoretical_2025} for theoretical guarantees of gradient-based training of VAEs. Assumption H\ref{hyp:lip} provides a similar setting for dependent data where the distances $d_k$, $1\leq k \leq T$ depend on the network architectures.

\paragraph{Application to  deep Gaussian mean field and recurrent variational family.}
Assumption H\ref{hyp:lip} is an extension of Assumption 3 in \cite{mbacke_statistical_2023} in the inhomogeneous Markovian case. Mean-field variational approximation and variational approximation based on recurrent neural networks are very common settings. We consider the following frameworks which show how to satisfy Assumption H\ref{hyp:lip}; they do not, however, satisfy H\ref{hyp:strongmixing} on an unbounded Gaussian latent space.
\begin{itemize}
    \item {\bf Mean-field. }In this setting, for all $2\leq k \leq T$, $z_k\mapsto \vd{k-1\vert k,x}(z_k,\cdot)$, does not depend on $z_k$ and is written $\vd{k-1,x}(\cdot)$. Assume that $\vd{k-1,x}$ is a Gaussian density with mean $\mu_{k,\parvar}(x_k)$ and diagonal covariance matrix with diagonal $\sigma^2_\parvar I$. Assume that for all $2\leq k \leq T$, $\mu_{k,\parvar}$  are bounded functions such that $\|\mu_\parvar(x_k)-\mu_\parvar(x'_k)\|\leq K_\parvar \|x_k-x_k'\|$. Then, H\ref{hyp:lip} holds with $d_{k-1}(\mathbf{x}_{1:T},\mathbf{x'}_{1:T}) = \|x_k-x'_k\|$.
    \item {\bf Recurrent. } In this setting, for all $2\leq k \leq T$, $z_k\mapsto \vd{k-1\vert k,x}(z_k,\cdot)$ is a Gaussian density with mean $\tilde{\mu}_{k,\parvar}(x_{k},z_k)$ and diagonal covariance matrix with diagonal $\sigma^2_\parvar I$. Assume that for all $2\leq k \leq T$,  $\tilde{\mu}_{k,\parvar}$ are bounded functions such that $\|\tilde\mu_{k,\parvar}(z_k,x_{k})-\tilde\mu_{k,\parvar}(z_k,x'_{k})\|\leq K_\parvar(z_k) \|x_{k}-x_{k}'\|$.
Then, H\ref{hyp:lip} holds with $d_{k-1}(\mathbf{x}_{1:T},\mathbf{x'}_{1:T}) = \|x_k-x'_k\|$.
\end{itemize} 

Define the space of additive state functionals:
\begin{equation}
    \label{eq:def:erond}
    \mathcal{E} = \left\{f:\latentspace \to \R_+\;\;  f: \mathbf{z}_{1:T} \mapsto \sum_{k=1}^T f_k(z_k)\right\}\eqsp.
\end{equation}
In Proposition~\ref{prop:main:markov}, we assume that the loss function is such that for all $\data_{1:T}$, $\latent_{1:T} \mapsto \ell_\theta (\latent_{1:T},\data_{1:T})$ is in $\mathcal{E}$, i.e. $\ell (\latent_{1:T},\data_{1:T})= \sum_{k=1}^T \ell_{\theta,k}(z_k,\data_{1:T})/T$.

\begin{proposition}
\label{prop:main:markov}
Assume that the loss function $\ell_\theta:\R^{T\times d_\ell}\times\R^{T\times\covardim}\rightarrow\R^+$ is such that there exists $b>0$ satisfying, for all $\theta\in\Theta$, $\latent_{\timerange}\in\R^{T\times d_\ell}$ and $\data_{\timerange}\in\R^{T\times\covardim}$,
$\ell_\theta(\latent_{\timerange},\data_{\timerange})\leq b$.
Assume also that H\ref{hyp:strongmixing}-\ref{hyp:lip} hold and that the variational distribution $\encdistrib_\paramenc$ is parametrised as in \eqref{eq:backward:factorization}. Assume that, for all $\data_{1:T}$, $\latent_{1:T}\mapsto\ell_\theta(\latent_{1:T},\data_{1:T})$ belongs to $\mathcal{E}$, with
$$
\ell_\theta(\latent_{1:T},\data_{1:T})=\frac{1}{T}\sum_{k=1}^T\ell_{\theta,k}(z_k,\data_{1:T})\eqsp.
$$
Assume that there exist $M>0$, $C>0$, $\underline{\sigma}^\parvar>0$ and $\overline{\sigma}^\parvar<\infty$ such that, for all $1\leq k\leq T$,
$$
\|\ell_{\theta,k}\|_\infty\leq M,\qquad \|C^\parvar_k\|_\infty\leq C,\qquad \overline{\sigma}^\parvar_k(\data_{1:T})\leq\overline{\sigma}^\parvar,\qquad \underline{\sigma}^\parvar_k(\data_{1:T})\geq\underline{\sigma}^\parvar\eqsp.
$$
Let $\theta\in\Theta$ be fixed independently of the observations and let $\decdistrib$ be a prior distribution which does not depend on the observed dataset and assume that
$
\encdistrib_\paramenc(\cdot\mid\data_{\timerange}^\samplesindex)\ll\decdistrib$, for $1\leq\samplesindex\leq\datasamples$. 
Then, for all $\lambda_\datasamples>0$ and $\eta_\datasamples\in(0,1)$, with probability at least $1-\eta_\datasamples$ with respect to
$\dataset=\left\{\data_{\timerange}^\samplesindex\right\}_{\samplesindex=1}^{\datasamples}\sim\datagendis^{\otimes\datasamples}$, 
\begin{align}
\E_{\data_{\timerange}\sim\datagendis}&\left[\E_{\encdistrib_\paramenc(\cdot\mid\data_{\timerange})}\left[\ell_\theta(\latent_{\timerange},\data_{\timerange})\right]\right]
-\frac{1}{\datasamples}\sum_{\samplesindex=1}^{\datasamples}\E_{\encdistrib_\paramenc(\cdot\mid\data_{\timerange}^\samplesindex)}\left[\ell_\theta(\latent_{\timerange},\data_{\timerange}^\samplesindex)\right]\notag\\
&\leq\frac{1}{\lambda_\datasamples}\sum_{\samplesindex=1}^{\datasamples}\kl\left(\encdistrib_\paramenc(\cdot\mid\data_{\timerange}^\samplesindex)||\decdistrib\right)+\frac{CM}{\datasamples T(1-\rho^\parvar)}\sum_{\samplesindex=1}^{\datasamples}\sum_{s=1}^T\E_{\data_{\timerange}\sim\datagendis}\left[\mathrm{d}_s(\data_{1:T},\data_{1:T}^\samplesindex)\right]+\frac{\lambda_\datasamples b^2}{8\datasamples}+\frac{\log(1/\eta_\datasamples)}{\lambda_\datasamples}\eqsp.
\label{eq:main}
\end{align}
\end{proposition}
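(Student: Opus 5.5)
The plan is to split the generalisation gap into a term of the form already controlled by Proposition~\ref{prop:conditional_hoeffding} and a stability term. The difference between them is that the variational distribution is conditioned on the fresh observation $\data_{\timerange}$ rather than on the training sample $\data_{\timerange}^\samplesindex$. For each $\samplesindex$, write
\begin{multline*}
\E_{\data_{\timerange}\sim\datagendis}\left[\E_{\encdistrib_\paramenc(\cdot\mid\data_{\timerange})}[\ell_\theta(\latent_{\timerange},\data_{\timerange})]\right]
=\E_{\data_{\timerange}\sim\datagendis}\left[\E_{\encdistrib_\paramenc(\cdot\mid\data^\samplesindex_{\timerange})}[\ell_\theta(\latent_{\timerange},\data_{\timerange})]\right]\\
+\E_{\data_{\timerange}\sim\datagendis}\left[\E_{\encdistrib_\paramenc(\cdot\mid\data_{\timerange})}[\ell_\theta(\cdot,\data_{\timerange})]-\E_{\encdistrib_\paramenc(\cdot\mid\data^\samplesindex_{\timerange})}[\ell_\theta(\cdot,\data_{\timerange})]\right].
\end{multline*}
We then average over $\samplesindex$. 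By Fubini, since $\ell_\theta$ is bounded, the first terms give exactly the first quantity on the left of \eqref{inequ_conditional_bound}. Applying Proposition~\ref{prop:conditional_hoeffding} with $\encdistrib=\encdistrib_\paramenc$ yields, with probability at least $1-\eta_\datasamples$, the KL term, the $\lambda_\datasamples b^2/(8\datasamples)$ term and the $\log(1/\eta_\datasamples)/\lambda_\datasamples$ term. It remains to bound each stability term deterministically by $\frac{CM}{T(1-\rho^\parvar)}\sum_{s=1}^T\E_{\datagendis}[\mathrm{d}_s(\data_{1:T},\data^\samplesindex_{1:T})]$.

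For the stability term, fix $\data_{1:T}$ and $\data'_{1:T}=\data^\samplesindex_{1:T}$. Set $f=\ell_\theta(\cdot,\data_{1:T})=\frac1T\sum_k f_k$ with $f_k=\ell_{\theta,k}(\cdot,\data_{1:T})$, so $\|f_k\|_\infty\le M$. This is the content of Proposition~\ref{prop:MC:df}, which I would prove (or invoke) as follows. Under the backward factorisation \eqref{eq:backward:factorization}, the marginal of $z_k$ is $\vd{T,x}B_{T,x}\cdots B_{k+1,x}$, where $B_{j,x}(z_j,\cdot)=\vd{j-1\vert j,x}(z_j,\cdot)$. We then telescope:
\[
\vd{1:T,x}[f_k]-\vd{1:T,x'}[f_k]=(\vd{T,x}-\vd{T,x'})B_{T,x}\cdots B_{k+1,x}f_k+\sum_{j=k+1}^{T}\vd{T,x'}B_{T,x'}\cdots B_{j+1,x'}(B_{j,x}-B_{j,x'})B_{j-1,x}\cdots B_{k+1,x}f_k .
\]
The key observation is that $(B_{j,x}-B_{j,x'})(z_j,\cdot)$ is a signed measure of total mass zero. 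Hence it annihilates constants, and only the oscillation of $h=B_{j-1,x}\cdots B_{k+1,x}f_k$ matters. Each kernel $B_{i,x}$ satisfies the Doeblin-type minorisation given by H\ref{hyp:strongmixing}, with the density bounded below and above relative to a reference measure. Its Dobrushin coefficient is therefore at most $\rho^\parvar=1-\underline\sigma^\parvar/\overline\sigma^\parvar$, so $\mathrm{osc}(h)\le(\rho^\parvar)^{j-1-k}\,2M$. Combining this with H\ref{hyp:lip} applied to $h-c$ for a centring constant $c$ gives
\[
|(B_{j,x}-B_{j,x'})h|\le C\,\mathrm{d}_{j-1}(\data,\data')\,M(\rho^\parvar)^{j-1-k}.
\]
The $T$-term is handled the same way, giving $C\,\mathrm{d}_T\,M(\rho^\parvar)^{T-k}$.

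Summing over $k$ and exchanging sums, each distance $\mathrm{d}_s$ collects a geometric series $\sum_{k\le s}(\rho^\parvar)^{s-k}\le 1/(1-\rho^\parvar)$. Hence
\[
|\vd{1:T,x}[f]-\vd{1:T,x'}[f]|\le\frac{CM}{T(1-\rho^\parvar)}\sum_{s=1}^T\mathrm{d}_s(\data,\data').
\]
Note that $f$ is evaluated at the same $\data_{1:T}$ in both expectations, so only the encoder changes. Taking expectation over $\data_{1:T}\sim\datagendis$ and averaging over $\samplesindex$ gives the middle term of \eqref{eq:main}, and adding this to the Proposition~\ref{prop:conditional_hoeffding} bound concludes.

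The main obstacle is the contraction step. Translating H\ref{hyp:strongmixing}, stated as pointwise density bounds, into a Dobrushin coefficient $\rho^\parvar$ uniformly in $\parvar$, $k$ and the observations requires a reference measure with finite mass on the latent space (implicitly a compact or finite latent space). The constants must also be tracked carefully so that exactly $CM$ appears and no stray factor of $2$ survives. The tool for this is the centring trick $\|h-c\|_\infty\le\mathrm{osc}(h)/2$.
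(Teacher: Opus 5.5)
Your proposal is correct and follows essentially the same route as the paper: add and subtract $\E_{\encdistrib_\paramenc(\cdot\mid\data^\samplesindex_{\timerange})}[\ell_\theta(\cdot,\data_{\timerange})]$, apply Proposition~\ref{prop:conditional_hoeffding} to the cross term, and bound the stability term deterministically via the backward telescoping and Dobrushin contraction of Proposition~\ref{prop:MC:df} and Corollary~\ref{cor:lip}, where the $1/T$ normalisation produces the stated constant. Your explicit centring step, which uses that $(B_{j,x}-B_{j,x'})(z_j,\cdot)$ has zero mass, is precisely what the paper leaves implicit. The finite-reference-measure condition you flag as the main obstacle is in fact implied by H\ref{hyp:strongmixing}, since a probability density bounded below by $\underline{\sigma}^\parvar>0$ everywhere forces the reference measure to have mass at most $1/\underline{\sigma}^\parvar$.
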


\begin{proof}
    The proof is postponed to \Cref{subsec:pf_main}.
\end{proof}

\Cref{prop:main:markov} bounds the generalisation gap with three terms. The first is a sum of KL divergences between the distributions conditioned on the chosen samples, and the chosen prior distributions. 
In the same setting as \citet[Theorem~4.3]{mbacke_statistical_2023}, if the observation space is compact with diameter $\Delta$, and if for all $1\leq s\leq T$, $\mathrm{d}_{s} (\mathbf{x}_{1:T},\mathbf{x}^i_{1:T})$ depends only on $\mathbf{x}_{s}$ and $\mathbf{x}^i_{s}$ and is upper bounded by $\Delta$, the second term is upper bounded by $CM\Delta /(1-\rho^\parvar)$. 
Following the same discussion as for \Cref{prop:conditional_hoeffding},
assume that $\overline K_\datasamples=O(\datasamples^\kappa)$ for some $\kappa\in[0,1)$. If the confidence level is fixed, \emph{i.e.}, $\eta_\datasamples=\eta\in(0,1)$, then
$\lambda_\datasamples=O(\datasamples^{(1+\kappa)/2})$
and the upper bound in \eqref{eq:conditional:hoeffding:optimized} is of order
$O(\datasamples^{(\kappa-1)/2})$. Then, there exist a constant $\tilde C>0$ such that   
\begin{equation*}
        \E_{\datagendis}\left[\E_{\encdistrib_\paramenc\left(\cdot \mid \data_{\timerange}\right)}\left[\ell\left(\latent_{\timerange}, \data_{\timerange}\right)\right] \right] - \frac{1}{\datasamples}\sum_{\samplesindex=1}^\datasamples \E_{\encdistrib_\paramenc\left(\cdot \mid\data_{\timerange}^\samplesindex\right)} \left[ \ell(\latent_{\timerange}, \data_{\timerange}^\samplesindex)\right] \leq    \frac{CM\Delta}{1-\rho^\parvar} + \tilde C \datasamples^{(\kappa-1)/2} \eqsp.
    \end{equation*}
In the setting where $\ell_\paramdec(\latent_{\timerange}, \data_{\timerange}) = \sum_{k=1}^{\datadim}\ell_{\theta,k}(z_k,x_k)/\datadim$ with $\|\ell_{\theta,k}\|_\infty \leq M$, the second term of the upper bound is uniformly bounded in time and scales similarly as in \citet[Theorem~4.3]{mbacke_statistical_2023}. However, contrary to them, the $1/T$ normalisation of the reconstruction loss compensates for the linear growth. Therefore, under these assumptions, leveraging structured variational distributions enables the derivation of bounds that remain stable as the trajectory length increases.

Propositions~\ref{prop:conditional_hoeffding} and~\ref{prop:main:markov} extend the conditional PAC-Bayesian framework introduced by \citet{mbacke_statistical_2023} to structured variational distributions for sequential latent variable models. The bounded-space reconstruction guarantee of \citet[Theorem~4.3]{mbacke_statistical_2023} provides an upper bound of the  form
$$
\Delta_n = \frac{K_\datasamples+\log(1/\eta_\datasamples)}{\lambda_\datasamples}
+K_\paramenc K_\theta\Delta
+\frac{\lambda_\datasamples\Delta^2}{8\datasamples}\eqsp,
$$
where $K_\paramenc$ and $K_\theta$ are global Lipschitz constants of the encoder and decoder and $\Delta$ is the diameter of the observation space. Our PAC-Bayesian concentration contribution has a similar dependence on the sample size, but Proposition~\ref{prop:conditional_hoeffding} requires neither Lipschitz continuity of the encoder nor a bounded observation space beyond the boundedness of the loss itself. 

The main improvement provided by Proposition~\ref{prop:main:markov} is that in place of the global term $K_\paramenc K_\theta\Delta$, the proposed bound involves the time-local stability contribution $CM
\sum_{\samplesindex=1}^{\datasamples}\sum_{s=1}^T
\E_{\data_{1:T}\sim\datagendis}
\left[
\mathrm d_s(\data_{1:T},\data_{1:T}^\samplesindex)
\right]/ (\datasamples T(1-\rho^\parvar))$. 
This term makes explicit the local sensitivity of each backward variational kernel to the observations and the contraction properties of the resulting latent Markov chain. In particular, if $\mathrm d_s(\data_{1:T},\data'_{1:T})\leq\Delta$ uniformly, then this contribution is bounded by $CM\Delta/(1-\rho^\parvar)$, 
uniformly in the trajectory length $T$. This contrasts with a direct application of a vector-valued bound to complete trajectories, for which a global diameter or Lipschitz constant may deteriorate with the trajectory length. The proposed result therefore preserves the standard PAC-Bayesian statistical rate while providing a refined and time-uniform description of the cost induced by structured variational inference.

\Cref{prop:main:markov} provides an upper bound for structured variational distributions for dependent data. This theoretical guarantee completes the excess-risk bounds of \cite{gassiat2024variational} in the PAC-Bayes setting. This work focuses on the case where the structure of the variational distribution matches the true generative model of a state-space model. This opens various perspectives to provide guarantees for specific  parametrisations of the variational distributions as designing efficient approximations remains an active area of research, see \cite{johnson2016composing,lin2018variational,campbell2021online}.

\paragraph{PAC-Bayesian bounds for sub-gamma reconstruction losses.}
The boundedness assumption of the loss is restrictive for standard VAE losses. In particular, squared reconstruction losses and negative log-likelihood losses are generally unbounded when state spaces are not compact or bounded. 

\begin{hypH}
\label{hyp:subgamma}
There exist constants $v>0$ and $c> 0$ such that, for all $\theta\in\Theta$, $\latent_{\timerange}\in\R^{T\times d_\ell}$ and $\alpha\in(0,1/c)$,
$$
\log\E_{\data_{\timerange}\sim\datagendis}\left[\exp\left(\alpha\left\{\E_{\data_{\timerange}\sim\datagendis}\left[\ell_\theta(\latent_{\timerange},\data_{\timerange})\right]-\ell_\theta(\latent_{\timerange},\data_{\timerange})\right\}\right)\right]\leq\frac{\alpha^2v}{2(1-c\alpha)}\eqsp.
$$
\end{hypH}

Proposition~\ref{prop:conditional} complements the bounded-space and manifold-based guarantees of \citet{mbacke_statistical_2023} by allowing unbounded reconstruction losses under a uniform sub-gamma condition. This provides a model-agnostic route for handling squared reconstruction losses and negative log-likelihood losses on non-compact spaces, without requiring the data-generating distribution to be explicitly represented as the pushforward of a low-dimensional reference distribution.

\begin{proposition}
\label{prop:conditional}
Assume that H\ref{hyp:subgamma} holds. Let $\theta\in\Theta$ be fixed independently of the observations and let $\decdistrib$ be a prior distribution on $\R^{T\times d_\ell}$ which does not depend on the observed dataset. Then, for all $\eta_\datasamples\in(0,1)$ and all
$0<\lambda_\datasamples<\datasamples/c$, with probability at least $1-\eta_\datasamples$ with respect to $\dataset\sim\datagendis^{\otimes\datasamples}$, the following inequality holds simultaneously for all conditional variational distributions $\encdistrib(\cdot\mid\data_{\timerange}^\samplesindex)$ satisfying $\encdistrib(\cdot\mid\data_{\timerange}^\samplesindex)\ll\decdistrib$:
\begin{multline}
\label{eq:conditional}
\frac{1}{\datasamples}\sum_{\samplesindex=1}^{\datasamples}\E_{\encdistrib(\cdot\mid\data_{\timerange}^\samplesindex)}\left[\E_{\data_{\timerange}\sim\datagendis}\left[\ell_\theta(\latent_{\timerange},\data_{\timerange})\right]\right]-\frac{1}{\datasamples}\sum_{\samplesindex=1}^{\datasamples}\E_{\encdistrib(\cdot\mid\data_{\timerange}^\samplesindex)}\left[\ell_\theta(\latent_{\timerange},\data_{\timerange}^\samplesindex)\right]\\
\leq\frac{1}{\lambda_\datasamples}\sum_{\samplesindex=1}^{\datasamples}\kl\left(\encdistrib(\cdot\mid\data_{\timerange}^\samplesindex)||\decdistrib\right)+\frac{\lambda_\datasamples v}{2\datasamples(1-c\lambda_\datasamples/\datasamples)}+\frac{\log(1/\eta_\datasamples)}{\lambda_\datasamples}\eqsp.
\end{multline}
\end{proposition}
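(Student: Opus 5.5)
The argument follows the proof of Proposition~\ref{prop:conditional_hoeffding}, with Hoeffding's lemma replaced by the sub-gamma control H\ref{hyp:subgamma}. The key device is to work on the product latent space $(\R^{T\times d_\ell})^{\datasamples}$ with the product prior $\decdistrib^{\otimes\datasamples}$. For $\mathbf{z}^{1:\datasamples}=(\latent^1_{\timerange},\ldots,\latent^\datasamples_{\timerange})$ define
$$
h(\mathbf{z}^{1:\datasamples},\dataset)=\sum_{\samplesindex=1}^{\datasamples}\Bigl\{\E_{\data_{\timerange}\sim\datagendis}\bigl[\ell_\theta(\latent^\samplesindex_{\timerange},\data_{\timerange})\bigr]-\ell_\theta(\latent^\samplesindex_{\timerange},\data^\samplesindex_{\timerange})\Bigr\}\eqsp.
$$
Because $\theta$ and $\decdistrib$ do not depend on $\dataset$, we can use the Donsker--Varadhan change of measure applied to the product posterior $Q=\bigotimes_{\samplesindex}\encdistrib(\cdot\mid\data^\samplesindex_{\timerange})$, which is absolutely continuous with respect to $\decdistrib^{\otimes\datasamples}$. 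This gives, simultaneously for all such $Q$,
$$
\frac{\lambda_\datasamples}{\datasamples}\E_{Q}[h]\leq \kl(Q\|\decdistrib^{\otimes\datasamples})+\log\E_{\decdistrib^{\otimes\datasamples}}\bigl[e^{\lambda_\datasamples h/\datasamples}\bigr]\eqsp.
$$
The KL of a product splits as $\sum_\samplesindex\kl(\encdistrib(\cdot\mid\data^\samplesindex_{\timerange})\|\decdistrib)$, and the left-hand side equals $\lambda_\datasamples$ times the gap appearing in \eqref{eq:conditional}.

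Next we control the random variable $\Xi=\E_{\decdistrib^{\otimes\datasamples}}[e^{\lambda_\datasamples h/\datasamples}]$. Since it does not depend on $Q$, Markov's inequality gives $\log \Xi\leq \log\E_{\dataset}[\Xi]+\log(1/\eta_\datasamples)$ with probability at least $1-\eta_\datasamples$. We then apply Fubini, which is legitimate because the integrand is nonnegative and the prior is data-free, to write
$$
\E_{\dataset}[\Xi]=\E_{\decdistrib^{\otimes\datasamples}}\Bigl[\prod_{\samplesindex=1}^{\datasamples}\E_{\data^\samplesindex_{\timerange}\sim\datagendis}\bigl[e^{(\lambda_\datasamples/\datasamples)\{\E_{\datagendis}[\ell_\theta(\latent^\samplesindex_{\timerange},\cdot)]-\ell_\theta(\latent^\samplesindex_{\timerange},\data^\samplesindex_{\timerange})\}}\bigr]\Bigr]\eqsp.
$$
The factorisation uses the independence of the samples $\data^\samplesindex_{\timerange}$ for fixed $\mathbf{z}^{1:\datasamples}$. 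We apply H\ref{hyp:subgamma} to each factor with $\alpha=\lambda_\datasamples/\datasamples$, which lies in $(0,1/c)$ exactly because $\lambda_\datasamples<\datasamples/c$. This bound is uniform in $\latent^\samplesindex_{\timerange}$, so
$$
\log\E_{\dataset}[\Xi]\leq \datasamples\frac{(\lambda_\datasamples/\datasamples)^2v}{2(1-c\lambda_\datasamples/\datasamples)}=\frac{\lambda_\datasamples^2v}{2\datasamples(1-c\lambda_\datasamples/\datasamples)}\eqsp.
$$
Dividing the resulting inequality by $\lambda_\datasamples$ yields \eqref{eq:conditional}.

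The main point of care is the first step, where the posterior must depend on $\dataset$ sample by sample. This is exactly why we work on the product space: the change of measure then holds simultaneously for all conditional families. The Fubini exchange also has to be done before any posterior enters, and it relies on $\theta$ being fixed and on the uniformity of H\ref{hyp:subgamma} in $\latent_{\timerange}$. Measurability of $h$ and finiteness of $\E_{\datagendis}[\ell_\theta]$ are implicit in H\ref{hyp:subgamma}.
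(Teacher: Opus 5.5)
Your proposal is correct and follows essentially the same route as the paper's proof. Your $h$ is the paper's $G_\datasamples$, H\ref{hyp:subgamma} is applied with $\alpha=\lambda_\datasamples/\datasamples$ after factorising over the independent samples, and Fubini and Markov's inequality are applied to $\Gamma_\datasamples(\dataset)=\E_{\decdistrib^{\otimes\datasamples}}[e^{\lambda_\datasamples G_\datasamples/\datasamples}]$, followed by the Donsker--Varadhan change of measure. The only cosmetic difference is that you apply Donsker--Varadhan once on the product space and split the KL, whereas the paper (via Lemma~\ref{lemma_conditional_hoeffding}) factorises the prior expectation and applies it sample by sample; the two are equivalent.
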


\begin{proof}
    The proof is postponed to \Cref{subsec:pf_subgamma}.
\end{proof}

\paragraph{Application to a structured, lightweight, discrete latent encoder distribution. }
\label{sec:framework}
The previous results identify two main structural ingredients to obtain reconstruction guarantees that remain controlled for long sequences: a stable Markovian variational representation and a sufficiently regular dependence of the corresponding transition kernels on the observations. To illustrate the scope of the theory, we now describe a simple instantiation designed to allow explicit verification of the structural assumptions underlying our bounds. 
For a time series $\mathbf{x}_{1:T}\in\R^{T\times d}$, let $\mathbf{z}_{1:T}\in\R^{T\times d_\ell}$ be a set of latent variables that encode the hidden structure and temporal dependencies of the observed sequence. The model depends on an unknown parameter $\theta\in\Theta$, such that for all $\theta$, the joint distribution of the latent data and the observations writes $\decdistrib_\paramdec(\series_{\timerange}, \latent_{\timerange}) = \decdistrib_\paramdec(\latent_{\timerange})\decdistrib_\paramdec(\series_{\timerange} \mid \latent_{\timerange})$.

Let $\dataset = \{ \mathbf{x}_{\timerange}^b\}_{b=1}^B$ be a reference dataset of $\datasamples$ independent time series of length $\datadim$. Following the few-shot learning approach, this reference dataset influences generalisation, and guides the proposed model to adapt to a new task. Discrete latent variables represent the underlying state space. Discrete representations are often more interpretable and naturally suited to modelling time series where the dynamics are governed by a finite set of regimes or modes. 
This approach is supported by models such as the Vector Quantised-VAE  \citep[VQ-VAE,][]{van2017neural,cohen2022diffusion}. In our setting, we assume that for all $t\in\{1,\ldots,T\}$, $z_t\in\{1,\ldots,B\}$. We thus use a lightweight, structured, VAE-like model which illustrates the limited growth in empirical risk relative to the length of time series $T$ or to the number of samples $n$, as predicted by the bound in \Cref{prop:main:markov}. 

Although the strong-mixing assumption H\ref{hyp:strongmixing} is challenging in non-compact state-spaces, it can be satisfied easily for discrete spaces, which makes this framework appealing. The finite latent space is particularly convenient from the perspective of Proposition~\ref{prop:main:markov}: if the transition probabilities are uniformly positive, the strong-mixing condition follows directly, while bounded and Lipschitz distance-based transition scores provide a simple mechanism for verifying H\ref{hyp:lip}. The methodological approach adopted here is primarily intended to highlight that the error in $T$ grows only linearly when using a structured variational distribution. A specific variational family designed to exhibit this behaviour is therefore introduced, with the objective of providing a proof of concept illustration of the theorem. The resulting structured variational distribution provides a simple mechanism for combining information from the pre-trained predictor and the reference trajectories while retaining the assumptions required by the theoretical analysis.

\emph{Sampling.} 
For each time series, decompose the joint distribution of the latent variable and the observation at a given time step as follows:
\begin{equation*}
\decdistrib_\paramdec\left(\seriestep_t,\latentpt\mid \latent_{1:t-1}, \series_{1:t-1}\right)
= \decdistrib_\paramdec\left(\latentpt\mid \latent_{1:t-1}, \series_{1:t-1}\right)\decdistrib_\paramdec\left(\seriestep_t\mid \latent_{1:t}, \series_{1:t-1}\right) \eqsp.
\end{equation*}
The distribution $\decdistrib_\paramdec\left(\datapt\mid \latent_{1:t}, \series_{1:t-1}\right)$ is referred to as the \emph{decoder} and is given by
\begin{equation*}
\decdistrib_\paramdec\left(\seriestep_t \mid \latent_{1:t}, \series_{1:t-1}\right) \\
= \mcn \left( \seriestep_t;\mu_{\paramdec}\left(\series_{1:t}^{\latent_{1:t}},\series_{1:t-1}\right), \sigma_\paramdec\left(\series_{1:t}^{\latent_{1:t}},\series_{1:t-1}\right) \right) \eqsp, 
\end{equation*}
with the notation $\series_{1:t}^{\latent_{1:t}} = (\seriestep_{1}^{z_{1}},\ldots,\seriestep_{t}^{z_{t}})$, where $\mu_\paramdec$ and $\sigma_\paramdec$ are parametric functions (such as recurrent or attention-based neural networks), and $\mcn(\cdot;\mu,\sigma)$ is the Gaussian probability density function with mean $\mu$ and variance $\sigma^2 I_d$.  
Such discrete clustering enables the model to capture heterogeneous behaviours within the reference dataset and to generalise better across different time series by leveraging learned groupings. It also facilitates transfer learning and adaptation, especially in multi-series contexts where the available data for any given single series might be limited. The simplest choice for the decoder is $\mu_{\paramdec}(\series_{1:t}^{\latent_{1:t}},\series_{1:t-1}) = x_t^{\latentpt}$, \emph{i.e.}, where the predicted observations are centred on a specific value of the reference dataset at that time step. 

\emph{Encoding.} In order to introduce a flexible variational distribution that can adapt to different tasks or data points with limited supervision, few-shot-based variational learning can be employed. The proposed variational distribution can be fine-tuned or conditioned on a small subset of the dataset, thereby enabling the construction of task-adaptive posteriors. 
For a sequence $\series_{1:T}$, our aim is to highlight the reconstruction bound proposed in Proposition~\ref{prop:main:markov}; in particular, its dependency with respect to the length $T$ of the sequences. Therefore, we introduce a simple Markovian variational distribution satisfying the decomposition proposed in \Cref{prop:main:markov}.

Denote $\horizon \in \N$ the horizon (number of time steps which the model predicts at a time step) and $\lookback \in \N$ the lookback (number of time steps input at a time step), and consider a generalisation of the model where we reconstruct $\horizon$ time steps at once, rather than a single step.
Rather than learning the variational approximation of $\decdistrib_\paramdec\left(\latent_{t:t+\horizon}\mid \latent_{1:t-1}, \series_{1:t-1}\right)$, a lightweight, structured approach is used, which allows a weighted contribution of both the locally and globally relevant time series. This approach follows the empirical findings of \citet{tonekaboni_decoupling_2022}, who learn two encoders; and of \citet{lee2025lightweight}, who combine a foundation model with a small forecaster. The first paper uses a local encoder to model the evolution of the series over time and a global encoder to model time-independent characteristics. The second provides an online linear forecaster which complements a foundation model's forecast, with the forecasts of the two models combined linearly at inference. Taking inspiration from this, our approach does not need to train two encoders nor deploy an online model alongside a foundation model, but uses distance calculations to locate the most similar time series in the pool of $B$ similar samples $\{\series_{\timerange}^{b}\}_{1\leq b\leq B}$. For each horizon window $\{(1, \ldots, \horizon), (\horizon+1, \ldots, 2\horizon), \ldots\}$, the distribution $\encdistrib_\paramenc\left(\latent_{t:t+\horizon}\mid \latent_{1:t-1}, \series_{1:t-1}\right)$ is built using:
\begin{itemize}
    \item a pretrained model $m_\theta$ that predicts $\mathbf{x}_{t:t+\horizon}$;
    \item a distance $\dist$ between the time series generated so far and each of the $B$ series in the reference pool. This can be interpreted as the global distance between the reference time series and the generated trajectory, and be observed over the whole series or only a lookback window $\lookback$;
    \item a distance $\dist_\textrm{loc}$ to measure dissimilarities between the prediction provided by $m_\theta$ and the time series in the reference dataset $\dataset$. This can be read as the local distance between the next steps in the time series and the generated possible trajectory;
    \item user-selected hyperparameters $\beta, \gamma \in \R^+$ which weigh the contribution of the local and global distance.
\end{itemize}
This distribution is given by:
\begin{equation*}
    \encdistrib_\paramenc\left(\latent_{\step:\step+\horizon} = \mathbf{\samplesindex}\mid \latent_{\step-\lookback:\step-1}, \series_{\step-\lookback:\step-1}\right) \propto \exp\left(-\beta \rmd\left(\serie_{\step-\lookback:\step-1}, \series_{\step-\lookback:\step-1}^{\latent_{\step-\lookback:\step-1}}\right)-\gamma \dist_\textrm{loc}\left(\hat{\series}_{\step:\step+\horizon}, \series_{\step:\step+\horizon}^\mathbf{\samplesindex}\right)\right) 
    \eqsp ,
\end{equation*}
where $\series^{\latent_{s+1:t}}_{s+1:t} = (x^{z_{s+1}}_{s+1},\ldots,x^{z_{t}}_{t})$, $\hat{\series}_{\step:\step+\horizon} = m_\theta(\series_{\step-\lookback:\step-1}^{\latent_{\step-\lookback:\step-1}})$ and $\mathbf{\samplesindex} = (i_0, i_1, \ldots, i_\horizon)$, where for \(i_h \in \{i_h\}_{h=0}^\horizon\), \(i_h\in\{1,\ldots, B\}\).

\Cref{algo:sample_hybrid} summarises the resulting procedure for reconstructing a time series. This method can use the pre-trained model $m_\theta$ only, the distances only, or combine a pre-trained model to guide sampling with the distances:  the latter two depend on the hyperparameters $\beta$ and $\gamma$ chosen.

\begin{algorithm}[ht]
	\caption{Structured Time Series Forecasting Framework.}
    \label{algo:sample_hybrid}
	\begin{algorithmic}[1]
        \Input Reference series $\{\mathbf{x}^b_{1:T}\}_{b=1}^B$, pre-trained model $m_\theta$, hyperparameters $\beta, \gamma$.
        \For{each step $\step$ in $\timerange$}
            \State Sample 
            \[
                \latent_{\step:\step+\horizon} \sim \encdistrib_\paramenc\left(\latent_{\step:\step+\horizon}=\mathbf{\samplesindex}\mid \latent_{\step-\lookback:\step-1}, \series_{\step-\lookback:\step-1}^\samplesindex\right) \propto \exp\left(-\beta \rmd\left(\serie_{\step-\lookback:\step-1}, \series_{\step-\lookback:\step-1}^{\latent_{\step-\lookback:\step-1}}\right)-\gamma \dist_\textrm{loc}\left(\hat{\series}_{\step:\step+\horizon}, \series_{\step:\step+\horizon}^\mathbf{\samplesindex}\right)\right) \eqsp .
            \]
            \State Sample \( \series_{\step:\step+\horizon}^{\latent_{\step:\step+\horizon}} \sim \decdistrib_\paramdec\left(\series_{\step:\step+\horizon}^{\latent_{\step:\step+\horizon}} = \series_{\step:\step+\horizon}^\samplesindex \mid \latent_{\step:\step+\horizon}, \serie_{\timerange}\right) \).
        \EndFor
    \Output Reconstructed series $\hat{\mathbf{x}}_{1:T}$.
    \end{algorithmic}
\end{algorithm}

\section{Conclusion}
\label{sec:conclusion}
We introduced a PAC-Bayesian reconstruction bound for VAEs with Markovian latent structure, providing theoretical guarantees for sequential latent variable models beyond the i.i.d. setting. In particular, our analysis shows that the resulting generalisation bound can remain stable with respect to the trajectory length, shedding light on the interplay between temporal depth and sample size in time-series forecasting.

Our work relies on assumptions, such as a control on the bias of the variational distribution, or the reconstruction losses needing to be sub-gamma. \Cref{prop:conditional_hoeffding} assumes $\theta\in\Theta$ to be fixed independently of observations, which in practise would mean using a decoder trained independently of the training data. Whilst this may seem to be a restrictive assumption, it is similar to the assumption from \citet{mbacke_statistical_2023}, which present bounds that work for a decoder, but uniformly for all encoders, and therefore requires to train the VAE on different samples from the ones used to calculate the bounds. These limitations may be restrictive in practice; building on existing literature which relaxes these conditions is an important direction for future research. We provide a framework which can serve as a paradigm where the assumptions are verified, and can serve as a starting point for future illustration of the reconstruction bounds derived.

Overall, this work provides a first step toward a theoretically grounded understanding of VAE-based time series models, and opens the door to integrating generalisation guarantees more tightly into the design of sequential deep learning systems.

\bibliography{references}
\bibliographystyle{tmlr}

\subsubsection*{Broader Impact Statement}
This work advances theoretical understanding of a broadly used machine learning model. The statistical findings aim to advance the theoretical machine learning field and could have a broad set of impacts, none of which the authors feel should be explicitly mentioned here.

\appendix
\section{Proofs of Theoretical Results}
\label{app:proofs}
We first rework for completeness \citet[Lemma B.1.]{mbacke_statistical_2023} written in the context of time series. 

\begin{lemma}
\label{lemma_conditional_hoeffding}
    Assume $\datagendis$ to be the data-generating distribution. Let $\latentspacept \coloneqq \left\{ \latent_{\timerange}^\samplesindex \right\}_{\samplesindex = 1}^\datasamples$. Then for any loss function $\ell:
    \R^{\datadim \times d_{\ell}} \times \R^{\datadim\times d} \rightarrow \R^+$, prior $p$, variational distribution $q$, real number $\lambda>0$ and for all i.i.d. sequences $(\series_{\timerange}^\samplesindex)_{\samplesindex=1}^\datasamples$,
    \begin{multline}
    \label{inequ_bound_hoeffding}
        \frac{1}{\datasamples}\sum_{\samplesindex=1}^\datasamples \E_{
        \encdistrib\left(\cdot \mid \data_{\timerange}^\samplesindex\right)} \biggl[   \E_{
        \datagendis}\bigl[ \ell(\latent_{\timerange}, \data_{\timerange}) \bigr]\biggr]  
        - \frac{1}{\datasamples}\sum_{\samplesindex=1}^\datasamples \E_{
        \encdistrib\left(\cdot \mid \data_{\timerange}^\samplesindex\right)} \biggl[ \ell(\latent_{\timerange}, \data_{\timerange}^\samplesindex)\biggr]  \\
         \leq \frac{1}{\lambda}\sum_{\samplesindex=1}^\datasamples\kl\bigl( \encdistrib(\cdot\mid \data_{\timerange}^\samplesindex) \vert \vert \decdistrib\left(\cdot\right)\bigr) + \frac{1}{\lambda}\log \E_{\latentspacept\sim p^{\otimes\datasamples}\left(\cdot \right)}\biggl[ \exp\biggl( \frac{\lambda}{\datasamples}\cdot \sum_{\samplesindex = 1}^{\datasamples}  \bigl\{\E_{
         \datagendis}\bigl[\ell\bigl(\latent_{\timerange}^\samplesindex, \data_{\timerange}\bigr)\bigr] - \ell\bigl(\latent_{\timerange}^\samplesindex, \data_{\timerange}^\samplesindex\bigr) \bigr\} \biggr) \biggr] \eqsp.
    \end{multline}
\end{lemma}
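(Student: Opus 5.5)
The proof uses the Donsker--Varadhan variational formula on the product space $(\R^{T\times d_\ell})^{\datasamples}$, applied conditionally on the fixed sample $\dataset=\{\data^\samplesindex_{\timerange}\}_{\samplesindex=1}^\datasamples$. This is the approach of \citet[Lemma B.1]{mbacke_statistical_2023}, with the single data point replaced by a whole trajectory $\data_{\timerange}$.

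First we fix the observations and define the product variational distribution $Q=\bigotimes_{\samplesindex=1}^{\datasamples}\encdistrib(\cdot\mid\data^\samplesindex_{\timerange})$ and the product prior $P=\decdistrib^{\otimes\datasamples}$. If some factor is not absolutely continuous with respect to $\decdistrib$, the corresponding KL is $+\infty$ and the inequality is trivial, so we may assume $Q\ll P$. By the chain rule for KL divergence between product measures,
\[
\kl(Q\|P)=\sum_{\samplesindex=1}^{\datasamples}\kl\bigl(\encdistrib(\cdot\mid\data^\samplesindex_{\timerange})\|\decdistrib\bigr)\eqsp.
\]

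Next we consider the function on $\latentspacept=(\latent^1_{\timerange},\ldots,\latent^\datasamples_{\timerange})$ given by
\[
h(\latentspacept)=\frac{\lambda}{\datasamples}\sum_{\samplesindex=1}^{\datasamples}\bigl\{\E_{\datagendis}[\ell(\latent^\samplesindex_{\timerange},\data_{\timerange})]-\ell(\latent^\samplesindex_{\timerange},\data^\samplesindex_{\timerange})\bigr\}\eqsp,
\]
which depends on the fixed sample. The Donsker--Varadhan inequality gives $\E_Q[h]\le\kl(Q\|P)+\log\E_P[e^{h}]$. Since $h$ is a sum of terms each depending only on $\latent^\samplesindex_{\timerange}$, and $Q$ is a product measure, we get $\E_Q[h]=\frac{\lambda}{\datasamples}\sum_\samplesindex\E_{\encdistrib(\cdot\mid\data^\samplesindex_{\timerange})}[\E_\datagendis\ell-\ell(\cdot,\data^\samplesindex_{\timerange})]$. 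Dividing by $\lambda$ yields the stated inequality: the left-hand side is exactly the generalisation gap, and the right-hand side is $\lambda^{-1}\sum_\samplesindex\kl+\lambda^{-1}\log\E_{\latentspacept\sim\decdistrib^{\otimes\datasamples}}[e^{h}]$. Here the interchange of $\E_{\encdistrib}$ and $\E_\datagendis$ is justified by Tonelli, since $\ell\ge0$.

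The only delicate step is applying Donsker--Varadhan when $h$ may be unbounded or the expectations infinite. We will use the form of the formula valid for measurable $h$ with $\E_P[e^h]<\infty$. If $\E_P[e^h]=\infty$, the bound holds trivially. When the losses are nonnegative, the negative part of $h$ is handled by nonnegativity, so $\E_Q[h]$ is well defined, possibly equal to $+\infty$ only when the right-hand side is also infinite. No probabilistic statement is needed at this stage: the lemma is a deterministic inequality for every realisation of the sample. The i.i.d. structure is used only later, when the log-moment term is controlled via Markov's inequality and Hoeffding's lemma, or via H\ref{hyp:subgamma}.
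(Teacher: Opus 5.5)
Your proof is correct and is essentially the paper's argument. The paper factorises $\E_{\mathsf{Z}\sim p^{\otimes n}}[e^{h}]$ into a product of $n$ one-coordinate expectations, applies the Donsker--Varadhan inequality to each factor against $q(\cdot\mid\mathbf{x}^i_{1:T})$, and then sums the logarithms; you instead apply Donsker--Varadhan once on the product space and use additivity of $\kl$ over product measures, which is the same computation organised differently (and your remark that the inequality is deterministic in the sample, using only the independence of the $\mathbf{z}^i$ under $p^{\otimes n}$, is accurate).
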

\begin{proof}
Let $(\data_{\timerange}^\samplesindex)_{1\leq \samplesindex \leq \datasamples}$ be sampled i.i.d. from distribution $\datagendis$. Notice from the properties of the exponential function and the independence of the samples that:
    \begin{multline*}
        \E_{\latentspacept \sim p^{\otimes\datasamples}}\biggl[ \exp \biggl( \frac{\lambda}{\datasamples} \cdot\sum_{\samplesindex = 1}^{\datasamples}  \bigl\{\E_{
        \datagendis}\bigl[\ell\bigl(\latent_{\timerange}^\samplesindex, \data_{\timerange}\bigr)\bigr] - \ell\bigl(\latent_{\timerange}^\samplesindex, \data_{\timerange}^\samplesindex\bigr) \bigr\} \biggr) \biggr] \\
        = \prod_{\samplesindex = 1}^{\datasamples}\E_{\latent_{\timerange} \sim p}\biggl[  \exp\biggl(\frac{\lambda}{\datasamples} \bigl\{\E_{
        \datagendis}\bigl[\ell\bigl(\latent_{\timerange}, \data_{\timerange}\bigr)\bigr] - \ell\bigl(\latent_{\timerange}, \data_{\timerange}^\samplesindex\bigr) \bigr\} \biggr) \biggr] \eqsp.
    \end{multline*}
    Then, by the Donsker-Varadhan change of measure \citep{donsker_asymptotic_1976}:
    \begin{multline*}
        \E_{
        \decdistrib}
        \biggl[ \exp\biggl(\frac{\lambda}{\datasamples} \bigl\{\E_{
        \datagendis}\bigl[\ell\bigl(\latent_{\timerange}, \data_{\timerange}\bigr)\bigr] - \ell\bigl(\latent_{\timerange}, \data_{\timerange}^\samplesindex\bigr) \bigr\} \biggr) \biggr]  \\
        \geq   \exp \biggl( \E_{
        \encdistrib\left(\cdot \mid \data_{\timerange}^\samplesindex\right)}\biggl[ \frac{\lambda}{\datasamples} \bigl\{ \E_{
        \datagendis}\bigl[ \ell(\latent_{\timerange}, \data_{\timerange}) \bigr] - \ell(\latent_{\timerange}, \data_{\timerange}^\samplesindex) \bigr\}\biggr] - \kl\bigl( \encdistrib(\cdot\mid \data_{\timerange}^\samplesindex) \vert \vert \decdistrib \left(\cdot\right)\bigr)\biggr)\eqsp. 
    \end{multline*}
Therefore, since the logarithm function is monotonic and can be applied to both sides of the inequality,
\begin{multline*}
    \frac{\lambda}{\datasamples}\sum_{\samplesindex=1}^\datasamples \E_{
    \encdistrib\left(\cdot \mid \data_{\timerange}^\samplesindex\right)} 
    \left[  \E_{
    \datagendis}\bigl[ \ell(\latent_{\timerange}, \data_{\timerange}) \bigr] - \ell(\latent_{\timerange}, \data_{\timerange}^\samplesindex) \right] - \sum_{\samplesindex=1}^\datasamples\kl\bigl( \encdistrib(\cdot\mid \data_{\timerange}^\samplesindex) \vert \vert \decdistrib \left(\cdot\right) \bigr) \\
    \leq\log \E_{\latentspacept \sim p^{\otimes\datasamples}\left(\cdot\right)}\biggl[ \exp\biggl( \frac{\lambda}{\datasamples}\cdot \sum_{\samplesindex = 1}^{\datasamples}  \bigl\{\E_{
    \datagendis}\bigl[\ell\bigl(\latent_{\timerange}^\samplesindex, \data_{\timerange}\bigr)\bigr] - \ell\bigl(\latent_{\timerange}^\samplesindex, \data_{\timerange}^\samplesindex\bigr) \bigr\}\biggr) \biggr] \eqsp,
\end{multline*}
where the logarithm and exponential cancel each other on the LHS. 
With the linearity of the expectation, moving the sum of the KL divergences to the upper bound and dividing both sides by $\lambda>0$ completes the proof.
\end{proof}

\subsection{Proof of Proposition~\ref{prop:conditional_hoeffding}}
\label{subsec:proof:main}

\begin{proof}
For latent variables $\latentspacept=\{\latent_{\timerange}^\samplesindex\}_{\samplesindex=1}^{\datasamples}$, define
$$
G_\datasamples(\latentspacept,\dataset)=\sum_{\samplesindex=1}^{\datasamples}\left\{\E_{\data_{\timerange}\sim\datagendis}\left[\ell_\theta(\latent_{\timerange}^\samplesindex,\data_{\timerange})\right]-\ell_\theta(\latent_{\timerange}^\samplesindex,\data_{\timerange}^\samplesindex)\right\}\eqsp.
$$
For all $1\leq\samplesindex\leq\datasamples$, conditionally on $\latent_{\timerange}^\samplesindex$, the random variable $\ell_\theta(\latent_{\timerange}^\samplesindex,\data_{\timerange}^\samplesindex)$ takes its values in $[0,b]$. Therefore, Hoeffding's lemma yields
$$
\E_{\data_{\timerange}^\samplesindex\sim\datagendis}\left[\exp\left(\frac{\lambda_\datasamples}{\datasamples}\left\{\E_{\data_{\timerange}\sim\datagendis}\left[\ell_\theta(\latent_{\timerange}^\samplesindex,\data_{\timerange})\right]-\ell_\theta(\latent_{\timerange}^\samplesindex,\data_{\timerange}^\samplesindex)\right\}\right)\right]\leq\exp\left(\frac{\lambda_\datasamples^2b^2}{8\datasamples^2}\right)\eqsp.
$$
Since the observations are independent,
$$
\E_{\dataset\sim\datagendis^{\otimes\datasamples}}\left[\exp\left(\frac{\lambda_\datasamples}{\datasamples}G_\datasamples(\latentspacept,\dataset)\right)\right]\leq\exp\left(\frac{\lambda_\datasamples^2b^2}{8\datasamples}\right)\eqsp.
$$
Define
$$
\Gamma_\datasamples(\dataset)=\E_{\latentspacept\sim\decdistrib^{\otimes\datasamples}}\left[\exp\left(\frac{\lambda_\datasamples}{\datasamples}G_\datasamples(\latentspacept,\dataset)\right)\right]\eqsp.
$$
By Fubini's theorem,
$$
\E_{\dataset\sim\datagendis^{\otimes\datasamples}}\left[\Gamma_\datasamples(\dataset)\right]\leq\exp\left(\frac{\lambda_\datasamples^2b^2}{8\datasamples}\right)\eqsp.
$$
Therefore, by Markov's inequality, with probability at least $1-\eta_\datasamples$,
$$
\Gamma_\datasamples(\dataset)\leq\frac{1}{\eta_\datasamples}\exp\left(\frac{\lambda_\datasamples^2b^2}{8\datasamples}\right)\eqsp.
$$
On this event, Lemma~\ref{lemma_conditional_hoeffding} gives
\begin{multline*}
\frac{1}{\datasamples}\sum_{\samplesindex=1}^{\datasamples}\E_{\encdistrib(\cdot\mid\data_{\timerange}^\samplesindex)}\left[\E_{\data_{\timerange}\sim\datagendis}\left[\ell_\theta(\latent_{\timerange},\data_{\timerange})\right]\right]-\frac{1}{\datasamples}\sum_{\samplesindex=1}^{\datasamples}\E_{\encdistrib(\cdot\mid\data_{\timerange}^\samplesindex)}\left[\ell_\theta(\latent_{\timerange},\data_{\timerange}^\samplesindex)\right]\\
\leq\frac{1}{\lambda_\datasamples}\sum_{\samplesindex=1}^{\datasamples}\kl\left(\encdistrib(\cdot\mid\data_{\timerange}^\samplesindex)||\decdistrib\right)+\frac{1}{\lambda_\datasamples}\log\Gamma_\datasamples(\dataset)\eqsp.
\end{multline*}
Using the previous upper bound on $\Gamma_\datasamples(\dataset)$ concludes the proof.
\end{proof}

\subsection{Proof of Proposition~\ref{prop:main:markov}}
\label{subsec:pf_main}
Proposition~\ref{prop:main:markov} requires to control the bias of the variational approximation of expectations of additive state functionals between different sequences of observations. This control is provided in Proposition~\ref{prop:MC:df}, which establishes that this error grows at most linearly in the number of observations under H\ref{hyp:strongmixing}.

\begin{proposition}
    \label{prop:MC:df}
    Assume that H\ref{hyp:strongmixing} and H\ref{hyp:lip} hold. Then for all function $f\in\mathcal{E}$, $\data_{1:T}$, $\data_{1:T}'$, 
    $$
\left|\vd{1:T,x}[f] - \vd{1:T,x'}[f]\right|\leq \sum_{k=1}^T\sum_{s=1}^{T-k+1}\|C^\parvar_{T-s+1}\|_\infty\prod_{u=s+1}^{T-k+1}\left(1 - \frac{\underline{\sigma}^\parvar_{T-u+2}(\data_{1:T})}{\overline{\sigma}^\parvar_{T-u+2}(\data_{1:T})}\right)\mathrm{d}_{T-s+1}(\data_{1:T},\data'_{1:T})\|f_k\|_{\infty}\eqsp,
    $$
    where $\underline{\sigma}^\parvar_{k}(\data_{1:T})$ and $\overline{\sigma}^\parvar_{k}(\data_{1:T})$ are defined as in H\ref{hyp:strongmixing}.
\end{proposition}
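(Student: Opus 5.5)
By linearity it suffices to treat $f(\latent_{1:T})=f_k(z_k)$ for a fixed $k$ and then sum over $k$. The first step is to write the backward-factorised measure \eqref{eq:backward:factorization} as a composition of kernels. Define the backward operators $B_{j,x}:=\vd{j-1\vert j,x}$ for $2\le j\le T$. The marginal of $z_k$ under $\vd{1:T,x}$ is then
$$
\vd{1:T,x}[f_k]=\vd{T,x}B_{T,x}B_{T-1,x}\cdots B_{k+1,x}f_k\eqsp.
$$
The coordinates $z_{1:k-1}$ integrate out, since each kernel is a probability kernel.

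Next I would telescope, replacing the kernels built from $\data'$ by those built from $\data$ one at a time, starting from the initial law. Write $\vd{T,x}=:B_{T+1,x}$ for notational convenience. Then
$$
\vd{1:T,x}[f_k]-\vd{1:T,x'}[f_k]=\sum_{j=k+1}^{T+1}\Bigl(\mu'_{j}\,(B_{j,x}-B_{j,x'})\Bigr)\,B_{j-1,x}\cdots B_{k+1,x}f_k\eqsp.
$$
Here $\mu'_j$ denotes the initial law composed with the $x'$-kernels above level $j$, which is a probability measure. Set $h_j:=B_{j-1,x}\cdots B_{k+1,x}f_k$.

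Each term is handled in two steps.

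\emph{Increment.} Assumption H\ref{hyp:lip} bounds $|(B_{j,x}-B_{j,x'})h|$ pointwise by $C^\parvar_{j-1}(z_j)\,\mathrm d_{j-1}\,\|h\|_\infty$, and the initial step is bounded analogously with $C^\parvar_T$ and $\mathrm d_T$. This gives the factor $\|C^\parvar_{T-s+1}\|_\infty\,\mathrm d_{T-s+1}$ after reindexing $j-1=T-s+1$.

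\emph{Contraction.} I would refine the bound on the increment using the oscillation of $h$ rather than its sup norm. Since $B_{j,x}-B_{j,x'}$ annihilates constants, $h$ may be replaced by $h-c$. This requires H\ref{hyp:lip} to apply to signed bounded functions; failing that, I would keep $\|h\|_\infty$ but extract the contraction from the forward product instead. Under H\ref{hyp:strongmixing}, each kernel $B_{u,x}$ satisfies the Doeblin minorisation
$$
B_{u,x}(z,\cdot)\ge \frac{\underline\sigma_u}{\overline\sigma_u}\,\nu_u(\cdot)\eqsp,
$$
where $\nu_u$ is the normalised version of the dominating measure. Concretely, $B_{u,x}(z,A)\ge \underline\sigma_u\,\mu(A)\ge(\underline\sigma_u/\overline\sigma_u)B_{u,x}(z',A)$. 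This yields Dobrushin coefficient at most $\rho_u=1-\underline\sigma_u/\overline\sigma_u$, so $\operatorname{osc}(B_{u,x}h)\le\rho_u\operatorname{osc}(h)$. Iterating gives
$$
\operatorname{osc}(h_j)\le\prod_{u=k+1}^{j-1}\rho_u\,\operatorname{osc}(f_k)\le\prod_{u=k+1}^{j-1}\rho_u\,\|f_k\|_\infty\eqsp,
$$
using $f_k\ge0$. Reindexing $u\mapsto T-u+2$ should reproduce exactly the product $\prod_{u=s+1}^{T-k+1}(1-\underline\sigma_{T-u+2}/\overline\sigma_{T-u+2})$ in the statement. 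Summing over $j$, i.e.\ over $s$, and then over $k$ concludes the proof.

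The main obstacle is the contraction step. It needs H\ref{hyp:lip}'s bound to hold with the oscillation, or $\|h-c\|_\infty$, in place of $\|h\|_\infty$, and it needs a precise Dobrushin estimate from the two-sided density bounds. Both are standard, but the indices must be matched carefully to obtain exactly the stated product range. Note that the mixing constants enter with $\data$ only, not $\data'$, which is consistent with placing the $x$-kernels below the swapped kernel in the telescoping.
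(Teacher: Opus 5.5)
Your proposal is correct and follows essentially the paper's argument. It uses the same reduction to a single $f_k$, the same one-kernel-at-a-time telescoping (with $x'$-kernels above and $x$-kernels below the swapped level $T-s+1$), H\ref{hyp:lip} for the increment, and the Dobrushin bound $1-\underline\sigma_u/\overline\sigma_u$ from H\ref{hyp:strongmixing} for the contraction. The only difference is that you contract the oscillation of $h_j$, whereas the paper propagates the zero-mass signed measure $\mu_s-\tilde\mu_s$ through the $x$-kernels; this is the dual formulation.

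Two small remarks. Your worry about signed test functions is unnecessary, since H\ref{hyp:lip} is stated for all bounded measurable $\psi$. Also, taking $c$ to be the midpoint of the range of $h_j$ gives $\|h_j-c\|_\infty=\tfrac12\operatorname{osc}(h_j)\le\prod_u\rho_u\|f_k\|_\infty$ without using $f_k\ge 0$, which membership in $\mathcal{E}$ does not in fact guarantee.
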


\begin{proof}
    Let $f\in\mathcal{E}$. Then,
    \begin{equation*}
       \vd{1:T,x}[f] - \vd{1:T,x'}[f] =  \int  \vd{1:T,x}(\latent_{1:T}) f(\latent_{1:T})\rmd \latent_{1:T} - \int  \vd{1:T,x'}(\latent_{1:T}) f(\latent_{1:T})\rmd \latent_{1:T} 
       = \sum_{k=1}^T \Delta^{\varphi}_{k,x,x'}(f)\eqsp,
    \end{equation*}
    where
    \begin{multline*}
        \Delta^{\varphi}_{k,x,x'}(f) = \int  \vd{T,x}(z_T)\prod_{s=k+1}^{T}\vd{s-1\vert s,x}(z_{s},z_{s - 1}) f_k(z_{k})\rmd \latent_{k:T} \\ - \int  \vd{T,x'}(z_T)\prod_{s=k+1}^{T}\vd{s-1\vert s,x'}(z_{s},z_{s - 1}) f_k(z_{k})\rmd \latent_{k:T}\eqsp.
    \end{multline*}
    Then, for all $1\leq k\leq T$,
    $$
    \Delta^{\varphi}_{k,x,x'}(f) = \sum_{s=1}^{T-k+1} \delta^{\varphi}_{s,x,x'}(f)\eqsp,
    $$
    where
    \begin{multline*}
        \delta^{\varphi}_{s,x,x'}(f) = \int  \prod_{u=1}^{s-1}\vd{T-u+1\vert T-u+2,x'}(z_{T-u+2},z_{T-u+1}) \prod_{u=s}^{T-k+1}\vd{T-u+1\vert T-u+2,x}(z_{T-u+2},z_{T-u+1})  f_k(z_{k})\rmd \latent_{k:T}\\
        - \int  \prod_{u=1}^{s}\vd{T-u+1\vert T-u+2,x'}(z_{T-u+2},z_{T-u+1}) \prod_{u=s+1}^{T-k+1}\vd{T-u+1\vert T-u+2,x'}(z_{T-u+2},z_{T-u+1})  f_k(z_{k})\rmd \latent_{k:T}\eqsp,
    \end{multline*}
    where by convention $\vd{T\vert T+1,x'}(z_{T+1},z_{T})=\vd{T,x'}(z_{T})$, $\vd{T\vert T+1,x}(z_{T+1},z_{T})=\vd{T,x}(z_{T})$ and 
    \begin{equation*}
    \prod_{u=1}^{0}\vd{T-u+1\vert T-u+2,x'}(z_{T-u+2},z_{T-u+1}) = 1\eqsp,\quad\mathrm{and}\quad
    \prod_{u=T-k+2}^{T-k+1}\vd{T-u+1\vert T-u+2,x}(z_{T-u+2},z_{T-u+1})=1\eqsp.
    \end{equation*} 
    Define
    \begin{align*}
        \mu_{s}(\rmd z_{T-s+1}) &= \int \prod_{u=1}^{s-1}\vd{T-u+1\vert T-u+2,x'}(z_{T-u+2},z_{T-u+1})\vd{T-s+1\vert T-s+2,x}(z_{T-s+2},z_{T-s+1})\rmd \latent_{T-s+2:T} \eqsp,\\
        \tilde{\mu}_{s}(\rmd z_{T-s+1}) &= \int  \prod_{u=1}^{s}\vd{T-u+1\vert T-u+2,x'}(z_{T-u+2},z_{T-u+1})\rmd \latent_{T-s+2:T} \eqsp,
    \end{align*}
    which yields
    \begin{multline*}
       \delta^{\varphi}_{s,x,x'}(f) = \int  \mu_{s}(\rmd z_{T-s+1}) \prod_{u=s+1}^{T-k+1}\vd{T-u+1\vert T-u+2,x}(z_{T-u+2},z_{T-u+1})  f_k(z_{k})\rmd \latent_{T-s:k}
        \\- \int  \tilde{\mu}_{s}(\rmd z_{T-s+1}) \prod_{u=s+1}^{T-k+1}\vd{T-u+1\vert T-u+2,x}(z_{T-u+2},z_{T-u+1})  f_k(z_{k})\rmd \latent_{T-s:k}\eqsp. 
    \end{multline*}
    By assumption H\ref{hyp:strongmixing}, using standard results for uniformly minorised Markov chains, see for instance \cite[Lemma~4.3.13]{cappe2005inference}, the Dobrushin coefficient of each backward kernel $\vd{T-u+1\vert T-u+2,x}$ is bounded by $\rho^\parvar_{T-u+2}(\data_{1:T}) = 1 - \underline{\sigma}^\parvar_{T-u+2}(\data_{1:T})/\overline{\sigma}^\parvar_{T-u+2}(\data_{1:T})$. 
    For $s=1$, for all bounded and measurable function $\psi$,
    \begin{equation*}
         \left|\vd{T,x}[\psi]-\vd{T,x'}[\psi]\right|\\
         \leq C^\parvar_{T}\mathrm{d}_T(\data_{1:T},\data_{1:T}')\|\psi\|_\infty\eqsp.
    \end{equation*}
    In addition, for $s>1$, for all bounded and measurable function $\psi$,
    \begin{equation*}
         \left|\vd{T-s+1\vert T-s+2,x}(z_{T-s+2},\psi)-\vd{T-s+1\vert T-s+2,x'}(z_{T-s+2},\psi)\right|
         \leq C^\parvar_{T-s+1}(z_{T-s+2})\mathrm{d}_{T-s+1}(\data_{1:T},\data'_{1:T})\|\psi\|_\infty\eqsp.
    \end{equation*}
Therefore,    
    \begin{align*}
         \left|\mu_{s}[\psi] - \tilde{\mu}_{s}[\psi]\right|  \leq \|C^\parvar_{T-s+1}\|_\infty\mathrm{d}_{T-s+1}(\data_{1:T},\data'_{1:T})\|\psi\|_\infty\eqsp.
    \end{align*}
    
    Therefore, we have  
    $$
    \left|\delta^{\varphi}_{s,x,x'}(f)\right|\leq \|C^\parvar_{T-s+1}\|_\infty\left(\prod_{u=s+1}^{T-k+1}\rho^\parvar_{T-u+2}(\data_{1:T})\right)\|f_k\|_{\infty}\mathrm{d}_{T-s+1}(\data_{1:T},\data'_{1:T})\eqsp,
    $$
    which concludes the proof.
\end{proof}

\begin{corollary}
    \label{cor:lip}
    Assume that H\ref{hyp:strongmixing} and H\ref{hyp:lip} hold.  Assume that  there exist $M$, $\underline{\sigma}^\parvar$, $\overline{\sigma}^\parvar$, $C^\parvar$,  such that for all $1\leq k \leq T$, $\|f_k\|_\infty\leq M$, $\|C^\parvar_{k}\|_\infty\leq C$, and for all $1\leq k \leq T$, $\overline{\sigma}^\parvar_{k}(\data_{1:T}) \leq \overline{\sigma}^\parvar $ and $\underline{\sigma}^\parvar_{k}(\data_{1:T})\geq \underline{\sigma}^\parvar$, then for all function $f\in\mathcal{E}$, $\data_{1:T}$, $\data_{1:T}'$,
    $$
    \left|\vd{1:T,x}[f] - \vd{1:T,x'}[f]\right|\leq  \frac{CM}{1-\rho^\parvar }\sum_{s=1}^{T} \mathrm{d}_{T-s+1}(\data_{1:T},\data'_{1:T})\eqsp,
    $$
    where $\rho^\parvar = 1 - \underline{\sigma}^\parvar/\overline{\sigma}^\parvar$.
\end{corollary}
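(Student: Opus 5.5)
The corollary follows directly from Proposition~\ref{prop:MC:df} by inserting the uniform bounds and summing a geometric series. I start from the bound of Proposition~\ref{prop:MC:df}. For every $1\leq k\leq T$ and every $1\leq s\leq T-k+1$, I use $\|C^\parvar_{T-s+1}\|_\infty\leq C$ and $\|f_k\|_\infty\leq M$. For the mixing factors, $\underline{\sigma}^\parvar_{k}(\data_{1:T})\geq\underline{\sigma}^\parvar$ and $\overline{\sigma}^\parvar_{k}(\data_{1:T})\leq\overline{\sigma}^\parvar$ give
$$
1-\underline{\sigma}^\parvar_{T-u+2}(\data_{1:T})/\overline{\sigma}^\parvar_{T-u+2}(\data_{1:T})\leq 1-\underline{\sigma}^\parvar/\overline{\sigma}^\parvar=\rho^\parvar .
$$
Hence the product over $u\in\{s+1,\ldots,T-k+1\}$ is at most $(\rho^\parvar)^{T-k+1-s}$, with the empty product equal to $1$ when $s=T-k+1$. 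Note that $\rho^\parvar\in[0,1)$, since $0<\underline{\sigma}^\parvar\leq\overline{\sigma}^\parvar<\infty$.

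This gives
$$
\left|\vd{1:T,x}[f]-\vd{1:T,x'}[f]\right|\leq CM\sum_{k=1}^T\sum_{s=1}^{T-k+1}(\rho^\parvar)^{T-k+1-s}\,\mathrm{d}_{T-s+1}(\data_{1:T},\data'_{1:T}).
$$
I then swap the order of summation. The constraint $1\leq s\leq T-k+1$ is equivalent to $1\leq k\leq T-s+1$, so the right-hand side becomes
$$
CM\sum_{s=1}^T\mathrm{d}_{T-s+1}(\data_{1:T},\data'_{1:T})\sum_{k=1}^{T-s+1}(\rho^\parvar)^{T-k+1-s}.
$$
Reindexing with $j=T-k+1-s$, which runs over $0,\ldots,T-s$, turns the inner sum into a truncated geometric series. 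It is bounded by $\sum_{j\geq0}(\rho^\parvar)^j=1/(1-\rho^\parvar)$, which yields the claimed inequality, uniform in $T$ apart from the sum of the distances. Since the $d_s$ are nonnegative, the bound survives the interchange.

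The only step that needs care is the index bookkeeping when swapping the double sum. I have to make sure that each distance $\mathrm{d}_{T-s+1}$ is paired with a geometric series in $k$, so that its total weight is $1/(1-\rho^\parvar)$ rather than something that grows like $T$. Everything else is a direct substitution of the uniform constants into Proposition~\ref{prop:MC:df}.
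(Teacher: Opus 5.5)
Your proposal is correct and follows the intended argument: the paper states this corollary without a separate written proof, as a direct consequence of \Cref{prop:MC:df}, and your steps supply exactly that derivation. You bound each mixing factor by $\rho^\parvar$ so that the product is at most $(\rho^\parvar)^{T-k+1-s}$, exchange the two finite sums so that the exponent runs over $0,\ldots,T-s$, and bound the truncated geometric series by $1/(1-\rho^\parvar)$; you are also right that $\underline{\sigma}^\parvar>0$ is what makes $\rho^\parvar<1$.
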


\paragraph{Proof of \Cref{prop:main:markov}.}
\begin{proof}
    By \eqref{eq:backward:factorization} that for all $\data_{1:T}\in\R^{T\times d}$, $\vd{1:T,x}(\latent_{1:T})=  \vd{T,x}(z_T)\prod_{k=2}^{T}\vd{k-1\vert k,x}(z_{k},z_{k - 1})$. \Cref{cor:lip} states that there exists $C, M>0$ and $\rho^\parvar$ such that:
    \begin{equation}
    \label{eq:gap_linear_bound}
        \left|\vd{1:T,x}[f] - \vd{1:T,x'}[f]\right|\leq  \frac{CM}{1-\rho^\parvar }\sum_{s=1}^{T} \mathrm{d}_{s}(\data_{1:T},\data'_{1:T})\eqsp.
    \end{equation}
    Using $f(\latent_{\timerange}) = \ell_\paramdec(\latent_{\timerange}, \data_{\timerange}) = \sum_{t=1}^T\ell_t(z_{\paramdec,t}, \datapt)/T$ and  \eqref{eq:gap_linear_bound} yields 
    \begin{equation*}
        \frac{1}{\datasamples}\sum_{\samplesindex = 1}^\datasamples \vd{1:\datadim, x}[f] - \frac{1}{\datasamples}\sum_{\samplesindex = 1}^\datasamples \vd{1:\datadim, x^\samplesindex}[f] \leq \frac{CM}{\datasamples\datadim}  \frac{1}{1-\rho^\parvar }\sum_{i=
        1}^n\sum_{s=1}^{T} \mathrm{d}_{s}(\data_{1:T},\data^i_{1:T}) \eqsp,
    \end{equation*}
    so that
    \begin{equation*}
        \vd{1:\datadim, x}[f] - \frac{1}{\datasamples}\sum_{\samplesindex = 1}^\datasamples \vd{1:\datadim, x^\samplesindex}[f] \leq \frac{CM}{\datasamples\datadim}\frac{1}{1-\rho^\parvar }\sum_{i=1}^{n}\sum_{s=1}^{T} \mathrm{d}_{s}(\data_{1:T},\data^i_{1:T}) \eqsp .
    \end{equation*}
   Therefore,
    \begin{equation*}
        \vd{1:\datadim, x}[f] -  \frac{1 }{1-\rho^\parvar }\frac{CM}{\datasamples\datadim}\sum_{i=1}^{n}\sum_{s=1}^{T} \mathrm{d}_{s}(\data_{1:T},\data^i_{1:T}) \leq \frac{1}{\datasamples}\sum_{\samplesindex = 1}^\datasamples \vd{1:\datadim, x^\samplesindex}[f]
    \end{equation*}
    and
    \begin{equation*}
        \E_{\latent_{\timerange}\sim\encdistrib\left(\cdot \mid \data_{\timerange}\right)}\left[\ell_\paramdec\left(\latent_{\timerange}, \data_{\timerange}\right)\right] -  \frac{1}{1-\rho^\parvar }\frac{CM}{n\datadim}\sum_{i=1}^{n}\sum_{s=1}^{T} \mathrm{d}_{s}(\data_{1:T},\data^i_{1:T}) \leq \frac{1}{\datasamples}\sum_{\samplesindex = 1}^\datasamples \E_{\latent_{\timerange}\sim \encdistrib\left(\cdot \mid \data_{\timerange}^\samplesindex\right)}\left[\ell_\paramdec\left(\latent_{\timerange}, \data_{\timerange}\right)\right]\eqsp .
    \end{equation*}
    Notice that that since $\vd{\timerange, x}$ specifies the observations $x$ which are used to define the pdf, whereas $f$ is the function $f(z)$ evaluated for some given $x$ which is has encoded, the last term is taken under the expectation given $\data_{\timerange}^\samplesindex$ but is evaluated for the same value $\data_{\timerange}$ as the other pdf.
    Then,
    \begin{multline*}
        \E_{\data_{\timerange}\sim\datagendis}\left[\E_{\latent_{\timerange}\sim\encdistrib\left(\cdot \mid \data_{\timerange}\right)}\left[\ell_\paramdec\left(\latent_{\timerange}, \data_{\timerange}\right)\right] -  \frac{1}{1-\rho^\parvar }\frac{CM}{\datasamples\datadim}\sum_{i=1}^{n}\sum_{s=1}^{T} \mathrm{d}_{s}(\data_{1:T},\data^i_{1:T}) \right]\\
        \leq \E_{\data_{\timerange}\sim\datagendis}\left[\frac{1}{\datasamples}\sum_{\samplesindex = 1}^\datasamples \E_{\latent_{\timerange}\sim \encdistrib\left(\cdot \mid \data_{\timerange}^\samplesindex\right)}\left[\ell_\paramdec\left(\latent_{\timerange}, \data_{\timerange}\right)\right]\right]\eqsp .
    \end{multline*}
    Therefore,
    \begin{multline}
    \label{inequ:lb_for_lb}
        \E_{\data_{\timerange}\sim\datagendis}\left[\E_{\latent_{\timerange}\sim\encdistrib\left(\cdot \mid \data_{\timerange}\right)}\left[\ell_\paramdec\left(\latent_{\timerange}, \data_{\timerange}\right)\right] \right] - \frac{CM}{\datasamples\datadim}\frac{1}{1-\rho^\parvar }\sum_{i=1}^{n}\sum_{s=1}^{T} \E_{\data_{\timerange}\sim\datagendis}\left[\mathrm{d}_{s}(\data_{1:T},\data^i_{1:T})\right]\\
        \leq \frac{1}{\datasamples}\sum_{\samplesindex = 1}^\datasamples \E_{\latent_{\timerange}\sim \encdistrib\left(\cdot \mid \data_{\timerange}^\samplesindex\right)}\left[\E_{\data_{\timerange}\sim\datagendis}\left[\ell_\paramdec\left(\latent_{\timerange}, \data_{\timerange}\right)\right]\right]\eqsp .
    \end{multline}
By Proposition~\ref{prop:conditional_hoeffding}, with probability at least $ 1-\eta_\datasamples$,
\begin{multline*}
\frac{1}{\datasamples}\sum_{\samplesindex=1}^{\datasamples}\E_{\encdistrib_\paramenc(\cdot\mid\data_{\timerange}^\samplesindex)}\left[\E_{\data_{\timerange}\sim\datagendis}\left[\ell_\theta(\latent_{\timerange},\data_{\timerange})\right]\right]-\frac{1}{\datasamples}\sum_{\samplesindex=1}^{\datasamples}\E_{\encdistrib_\paramenc(\cdot\mid\data_{\timerange}^\samplesindex)}\left[\ell_\theta(\latent_{\timerange},\data_{\timerange}^\samplesindex)\right]\\
\leq \frac{1}{\lambda_\datasamples}\sum_{\samplesindex=1}^{\datasamples}\kl\left(\encdistrib_\paramenc(\cdot\mid\data_{\timerange}^\samplesindex)||\decdistrib\right) +\frac{\lambda_\datasamples b^2}{8\datasamples}+\frac{\log(1/\eta_\datasamples)}{\lambda_\datasamples}\eqsp.
\end{multline*}

    Lower bound the LHS of this inequality with \eqref{inequ:lb_for_lb}:
    \begin{multline*}
    \E_{\data_{\timerange}\sim\datagendis}\left[\E_{\latent_{\timerange}\sim\encdistrib\left(\cdot \mid \data_{\timerange}\right)}\left[\ell\left(\latent_{\timerange}, \data_{\timerange}\right)\right] \right] - \frac{CM}{\datasamples\datadim}\frac{1}{1-\rho^\parvar }\sum_{i=1}^{n}\sum_{s=1}^{T} \E_{\data_{\timerange}\sim\datagendis}\left[\mathrm{d}_{s}(\data_{1:T},\data^i_{1:T})\right] \\ \quad\quad\quad\quad\quad- \frac{1}{\datasamples}\sum_{\samplesindex=1}^\datasamples \E_{\latent_{\timerange}\sim\encdistrib\left(\cdot \mid\data_{\timerange}^\samplesindex\right)} \left[ \ell(\latent_{\timerange}, \data_{\timerange}^\samplesindex)\right]\\
        \leq \frac{1}{\datasamples}\sum_{\samplesindex=1}^\datasamples \E_{\latent_{\timerange}\sim\encdistrib\left(\cdot \mid\data_{\timerange}^\samplesindex\right)} \left[\E_{\data_{\timerange}\sim\datagendis}\left[ \ell(\latent_{\timerange}, \data_{\timerange}) \right]\right]  - \frac{1}{\datasamples}\sum_{\samplesindex=1}^\datasamples \E_{\latent_{\timerange}\sim\encdistrib\left(\cdot \mid\data_{\timerange}^\samplesindex\right)} \left[ \ell(\latent_{\timerange}, \data_{\timerange}^\samplesindex)\right] \eqsp. 
    \end{multline*}
    This gives:
    \begin{align*}
        \E_{\data_{\timerange}\sim\datagendis}&\left[\E_{\latent_{\timerange}\sim\encdistrib\left(\cdot \mid \data_{\timerange}\right)}\left[\ell\left(\latent_{\timerange}, \data_{\timerange}\right)\right] \right] - \frac{CM}{\datasamples\datadim}\frac{1}{1-\rho^\parvar }\sum_{i=1}^{n}\sum_{s=1}^{T} \E_{\data_{\timerange}\sim\datagendis}\left[\mathrm{d}_{s}(\data_{1:T},\data^i_{1:T})\right] \\
        &- \frac{1}{\datasamples}\sum_{\samplesindex=1}^\datasamples \E_{\latent_{\timerange}\sim\encdistrib\left(\cdot \mid\data_{\timerange}^\samplesindex\right)} \left[ \ell(\latent_{\timerange}, \data_{\timerange}^\samplesindex)\right] \leq \frac{1}{\lambda_n}\sum_{\samplesindex=1}^\datasamples\kl\left( \encdistrib\left(\cdot \mid\data_{\timerange}^\samplesindex\right) \vert \vert \decdistrib\left(\cdot\right)\right)  + \frac{\lambda_\datasamples b^2}{8\datasamples}+\frac{\log(1/\eta_\datasamples)}{\lambda_\datasamples} \eqsp,
    \end{align*}
    where moving the second term of the LHS to the upper bound completes the proof.
\end{proof}

\subsection{Proof of Proposition~\ref{prop:conditional}}
\label{subsec:pf_subgamma}

\begin{proof}
Define
$$
G_\datasamples(\latentspacept,\dataset)=\sum_{\samplesindex=1}^{\datasamples}\left\{\E_{\data_{\timerange}\sim\datagendis}\left[\ell_\theta(\latent_{\timerange}^\samplesindex,\data_{\timerange})\right]-\ell_\theta(\latent_{\timerange}^\samplesindex,\data_{\timerange}^\samplesindex)\right\}\eqsp.
$$
By independence of the observations and H\ref{hyp:subgamma}, applied with $\alpha=\lambda_\datasamples/\datasamples$,
\begin{align*}
\log\E_{\dataset\sim\datagendis^{\otimes\datasamples}}\left[\exp\left(\frac{\lambda_\datasamples}{\datasamples}G_\datasamples(\latentspacept,\dataset)\right)\right]
&\leq\frac{\lambda_\datasamples^2v}{2\datasamples(1-c\lambda_\datasamples/\datasamples)}\eqsp.
\end{align*}
Then, writing
$$
\Gamma_\datasamples(\dataset)=\E_{\latentspacept\sim\decdistrib^{\otimes\datasamples}}\left[\exp\left(\frac{\lambda_\datasamples}{\datasamples}G_\datasamples(\latentspacept,\dataset)\right)\right]\eqsp.
$$
yields
$$
\E_{\dataset\sim\datagendis^{\otimes\datasamples}}\left[\Gamma_\datasamples(\dataset)\right]\leq\exp\left(\frac{\lambda_\datasamples^2v}{2\datasamples(1-c\lambda_\datasamples/\datasamples)}\right)\eqsp.
$$
Therefore, by Markov's inequality, with probability at least $1-\eta_\datasamples$,
$$
\Gamma_\datasamples(\dataset)\leq\frac{1}{\eta_\datasamples}\exp\left(\frac{\lambda_\datasamples^2v}{2\datasamples(1-c\lambda_\datasamples/\datasamples)}\right)\eqsp.
$$
On this event, by the Donsker-Varadhan change-of-measure inequality,
\begin{equation*}
\frac{1}{\datasamples}\sum_{\samplesindex=1}^{\datasamples}\E_{\encdistrib(\cdot\mid\data_{\timerange}^\samplesindex)}\left[\E_{\datagendis}\left[\ell_\theta(\latent_{\timerange},\data_{\timerange})\right]-\ell_\theta(\latent_{\timerange},\data_{\timerange}^\samplesindex)\right]\
\leq\frac{1}{\lambda_\datasamples}\sum_{\samplesindex=1}^{\datasamples}\kl\left(\encdistrib(\cdot\mid\data_{\timerange}^\samplesindex)||\decdistrib\right)+\frac{1}{\lambda_\datasamples}\log\Gamma_\datasamples(\dataset)\eqsp.
\end{equation*}
Using the previous upper bound on $\Gamma_\datasamples(\dataset)$ concludes the proof.
\end{proof}

\end{document}